\crefname{algocf}{Algorithm}{Algorithms}
\Crefname{algocf}{Algorithm}{Algorithms}
\icmltitlerunning{Robust Consensus Ranking}
\begin{document}

\twocolumn[
\icmltitle{Robust Consensus in Ranking Data Analysis:\\ Definitions, Properties and Computational Issues}



\icmlsetsymbol{equal}{*}

\begin{icmlauthorlist}
\icmlauthor{Morgane Goibert}{yyy,comp}
\icmlauthor{Clément Calauzènes}{yyy}
\icmlauthor{Ekhine Irurozki}{comp}
\icmlauthor{Stéphan Clémençon}{comp}

\end{icmlauthorlist}

\icmlaffiliation{yyy}{Criteo AI Lab, France}
\icmlaffiliation{comp}{Télécom Paris, France}

\icmlcorrespondingauthor{Morgane Goibert}{morgane.goibert@gmail.com}

\icmlkeywords{Machine Learning, ICML}

\vskip 0.3in
]



\printAffiliationsAndNotice{} 

\begin{abstract}

As the issue of robustness in AI systems becomes vital, statistical learning techniques that are reliable even in presence of partly contaminated data have to be developed. Preference data, in the form of (complete) rankings in the simplest situations, are no exception and the demand for appropriate concepts and tools is all the more pressing given that technologies fed by or producing this type of data (\textit{e.g.} search engines, recommending systems) are now massively deployed. However, the lack of vector space structure for the set of rankings (\textit{i.e.} the symmetric group $\mathfrak{S}_n$) and the complex nature of statistics considered in ranking data analysis make the formulation of robustness objectives in this domain challenging. In this paper, we introduce notions of robustness, together with dedicated statistical methods, for \textit{Consensus Ranking} the flagship problem in ranking data analysis, aiming at summarizing a probability distribution on $\mathfrak{S}_n$ by a \textit{median} ranking. Precisely, we propose specific extensions of the popular concept of \textit{breakdown point}, tailored to consensus ranking, and address the related computational issues. Beyond the theoretical contributions, the relevance of the approach proposed is supported by an experimental study.

\end{abstract}

\section{Introduction}

One of the keys to the path of a trustworthy AI is undeniably the design of statistical learning techniques that can resist, to a certain extent, possible corruptions of the training dataset. The analysis of the influence of atypical observations on the outputs of machine-learning algorithms has received increasing interest in the AI literature these last few years and has recently motivated a wide variety of dedicated works (refer to \citet{lugosi2019risk,monk} for instance), revisiting in particular seminal concepts in \textit{Robust Statistics} such as the $\varepsilon$-contamination model, where the training dataset is supposedly contaminated by a fraction $\varepsilon\in (0,1)$ of outliers \cite{huber1964robust}. It is the goal of this paper to investigate the statistical analysis of ranking data from the perspective of robustness. Ranking data are indeed ubiquitous in modern technologies such as search engines or recommending systems and the question of their reliability in presence of corrupted data is a scientific challenge. Given the nature of preference data, observable in the form of permutations (complete rankings, \textit{i.e.} elements of the symmetric group $\mathfrak{S}_n$) in the simplest case, informative statistics based on the latter are far from being simple. This is mainly due to the lack of vector space structure on $\mathfrak{S}_n$ and the impossibility of averaging directly such data. A major problem in ranking data analysis referred to as \textit{Consensus Ranking} or \textit{Ranking Aggregation}, and which the present article focuses on, consists in its simplest formulation in summarizing a ranking distribution (\textit{i.e.} a probability distribution on $\mathfrak{S}_n$) by a \textit{median ranking} \cite{Kemeny59}. Even though this problem has a long history in social choice theory, see \textit{e.g.} \citet{de2014essai,de1781memoire}, it has been the subject of much attention within the machine-learning community, see \textit{e.g.} \citet{procaccia2016optimal,JKS16} among many others, references being far too numerous to be listed exhaustively. While most documented works concern the issue of computing (approximately) median rankings with theoretical guarantees, this paper studies in contrast the robustness properties of consensus ranking methods by means of a novel approach, extending that developed in \citet{Huber} for multivariate data. We emphasize that this angle is original to the best of our knowledge and distinguishes itself from related results in social choice theory, where median rankings are identified with \textit{voting rules}. 
In line with these works, the well-known Gibbard-Satterthwaite theorem~\cite{gibbard1973manipulation, satterthwaite1975strategy}  states that every reasonable voting rule can be manipulated. We point out that there has been a wide body of research devoted to characterizing the complexity of computing manipulations, NP-hardness result on manipulation being considered as a guarantee for robustness \cite{bartholdi1989computational, Davies2011, Brandt2016}. However, beyond-worst-case analysis shows that the problems are easy in practice \cite{Zuckerman2009}. In the present article, we complement these works on the issue of robustness to vote manipulation by investigating how the seminal concept of \textit{breakdown point}, a popular measure of robustness of estimators in multivariate statistical analysis, may apply to consensus ranking. Basically, it can be defined as the proportion of outliers or (possibly deliberately) corrupted observations that can contaminate the data sample without jeopardizing the statistic. As will be shown, one of the main difficulties faced in the context considered here lies in the fact that consensus rankings are often obtained by solving an optimization problem and no closed analytical form for the solutions is available in general. Consequently, the computation of breakdown points of ranking statistics is generally a computational challenge. Our main proposal here consists in relaxing the constraint stipulating that the summary of a ranking distribution should be necessarily represented by a single ranking (\textit{i.e.} a strict order on the set of items indexed by $i\in \{1,\;\ldots,\; n \}$), or equivalently by a point mass on $\mathfrak{S}_n$. Instead, we suggest summarizing a ranking distribution by a \textit{bucket ranking} (\textit{i.e.} a weak order on the set $\{1,\;\ldots,\; n \}$), the possibility of observing ties in the orderings considered being shown to have crucial advantages regarding robustness.

The paper is organized as follows. In \cref{sec:setting}, basics in ranking aggregation and the notion of breakdown function are introduced, as well as the contributions of our paper. \cref{sec:robustness} focus on robustness, by detailing our theoretical results on the breakdown functions for the classical median, extending this concept to bucket rankings, and providing an optimization algorithm to estimate it in practice. \cref{sec:our_stats} is dedicated to the definition of our robust statistic, called the Downward Merge statistic. Finally, experiments are done in \cref{sec:exps} to highlight the usefulness of our Downward Merge statistic for solving Robust Consensus Ranking tasks.




\section{
Framework and Problem Statement}
\label{sec:setting}

We start with a reminder of key concepts in ranking data analysis and \textit{Robust Statistics}. The interested reader can refer to \citet{AY14,Huber} for more details. Here and throughout,
a ranking over a set of $n\geq 1$ items is represented as a permutation $\sigma \in \pS$ where $\pS$ is the symmetric group. By convention, the rank $r$ of an item $i\in[n]$ is $r=\sigma(i)$. For any measurable space $\cX$, $\cM^1_+(\cX)$ is the set of probability measures on $\cX$, ${\rm TV}(p,q)$ the total variation distance between $p$ and $q$ in $\cM^1_+(\cX)$.

\subsection{Ranking Data and Summary Statistics}

The descriptive analysis of probability distributions, or datasets for their empirical counterparts, is a fundamental problem in statistics. For distributions on Euclidean spaces such as $\lR^d$, this problem has been widely studied and covered by the literature, with the study of statistics ranging from the simplistic sample mean to more sophisticated data functionals, such as $U/L/R/M$-statistics or depth functions for instance \cite{vdV98}. 

Defining similar notions for probability distributions on $\pS$, the space of rankings, is challenging due to the absence of vector space structure. However, fueled by the recent surge of applications using preference data, such as \textit{e.g.} recommender systems, the statistical analysis of ranking data has recently regained attention and certain classic problems have been revisited, as for instance those related to consensus rankings and their generalization ability (see \textit{e.g.}~\citet{Korba2017} and the references therein) or to the extension of depth functions to ranking data \cite{goibert2022depthranking}.

\paragraph{Central tendency or location.} Statistics measuring centrality, such as the mean (or the median for univariate distribution), can be seen as barycenters of the sampling observations w.r.t a certain distance. Consensus Ranking / Ranking Aggregation extends this idea to probability distributions on $\pS$ \cite{Deza}. Given a (pseudo-)metric $d$ defined on $\pS$ and a distribution $p\in\distribs$, a \emph{ranking median} $\sigma^{\rm med}_{p,d}\in\pS$ can be defined as
\begin{align}
    \label{eq:ranking_median}
    \sigma^{\rm med}_{d}(p) := \argmin_{\sigma \in \pS} \lE_{\Sigma \sim p}(d(\sigma, \Sigma)).
\end{align}
A well-studied instance of ranking median is the \emph{Kemeny consensus}, which corresponds to the situation where $d$ is the \emph{Kendall Tau} distance: for all $\sigma,\;  \nu$ in $\pS$,
\begin{align}
    d_{\tau}(\sigma, \nu) = \frac{2}{n(n-1)}\sum_{i<j}\indicator{\sigma(i)<\sigma(j)}\indicator{\nu(i) > \nu(j)}
    \label{eq:kendall_tau}
\end{align}
Another common choice is the \emph{Borda count} when $d$ is the \emph{Spearman Rho}, see \cref{app:additional_metrics} for more details. 
Moreover, when $d$ is the Kendall tau, Borda is a $O(n \log n)$, 5-approximation of the Kemeny ranking~\cite{Caragiannis2013,JKS16,Coppersmith:2010}, which is a NP-hard to compute~\cite{Dwork}. 

\paragraph{More complex statistics based on ranking data.} Often, the information carried by a location statistic must be complemented. For instance, a notion of \emph{dispersion} or \emph{shape} is generally key to assessing convergence results or building confidence regions. To this end, the notion of \emph{statistical depth function} has been developed for multivariate data (in Euclidean spaces) (see \cite{ZuoSerfling00} and the references therein) and recently adapted to ranking, refer to \cite{goibert2022depthranking}.
However, as more complex statistics are more likely to exhibit robustness issues, we focus on simple statistics estimating location for ranking distribution.

\subsection{Robust Statistics}

To evaluate the robustness of a statistic, the notion of \textit{breakdown function} has been introduced in the seminal work of \cite{huber1964robust}. Informally, the breakdown function for a statistic $T$ on a distribution $p$ measures the minimal attack budget required for an adversarial distribution to change the outcome of the statistic $T$ by an amount at least $\delta > 0$. 



\begin{restatable}{definition}{defbreakdownfunction}\label{def:breakdown_function}
    {\sc{(Breakdown Function)}} Let $\cX$ and $\cY$ be measurable spaces, $p\in\cM^1_+(\cX)$, $T: \cM^1_+(\cX) \to \cY$ a measurable function and $d$ a metric on $\cY$.
    For any level $\delta \geq 0$, the breakdown function of the functional $T$ at $p$ is 
    \begin{align*}
        \varepsilon^\star_{d, p,T}(\delta) = \inf \left\{ \varepsilon > 0 \, \middle| \, \sup_{q : {\rm TV}(p,q) \leq \varepsilon} d(T(p), T(q)) \geq \delta \right\}.
    \end{align*}
\end{restatable}
In the traditional case $\cX=\cY=\mathbb{R}$, the level $\delta$ is generally set to $+\infty$ and the budget required is referred to as \emph{breakdown point}. 
In the extreme case, when $T$ is the identity and $\delta = 0^+$, $\varepsilon^\star$ quantifies the budget of attack under which \emph{identifiability} of the distribution is possible (which requires the additional knowledge that $p$ belongs to some family).

\paragraph{Application to Ranking Data.} In \citet{agarwal2020rank} such a study on identifiability is provided for the Bradley-Terry-Luce \cite{BT1952, Luce59} model under a budget constraint on pairwise marginals rather than the Total Variation, and \citet{jin2018} on the Heterogeneous Thurstone Models \cite{thurstone1927}.
However, summary statistics, such as a central tendency, are generally harder to break than the full distribution itself, so the breakdown function provides a finer quantification of robustness than the identifiability of the distribution.
Since the distances on $\pS$ are bounded, in general, the full breakdown function needs to be considered and one cannot focus only on a particular level such as $\delta = 0^+$ or $\delta = +\infty$. From here and throughout, the distance $d$ and the attack amplitude $\delta$ are normalized to lie between $0$ and $1$. 

The robustness of the median statistic when an adversary is allowed to attack with any strategy a pairwise model has also been studied ~\cite{datar2022byzantine}. They characterize the robustness of two statistics in terms of the L2 distance on distributions. We propose in Definition~\ref{def:breakdown_function} a more general and natural measure for robustness as a function of  the distance between the true and a corrupted statistic. 

\paragraph{Bucket Rankings as a robustness candidate.} In rankings, adversarial attacks often target pairs of items that are ``close" in some sense \cite{agarwal2020rank}: consecutive ranks, a pairwise marginal probability close to $\frac{1}{2}$, \ldots Thus, a simple and efficient way to robustify a ranking median is to accept \emph{ties}, rather than being restricted to a strict order.

\subsection{Challenges and Contributions}

There is a wide number of median statistic studies motivated by the lack of analytical expression and the computational and statistical challenges that arise in the estimation process.
However, robustness results for ranking statistics are rare and not rigorous enough for comparing different estimators. 

\begin{contribution}
Using \cref{def:breakdown_function} with the Kendall tau distance provides a straightforward measure of robustness for ranking medians.
In \cref{sec:bd_kem} we provide a lower-bound on the breakdown function for a ranking median (\cref{thm:ubbreakdownfunctionmedian}) and a tight upper-bound for the Kemeny consensus (\cref{thm:ubbreakdownfunctionmedian}). 
\end{contribution}

Moreover, slight perturbations in the pairwise relations of items that are similar to each other can imply breaking a median estimator, showing a lack of robustness. It is natural to propose more robust estimators by allowing pairs of items to be ``equally ranked", i.e., by considering bucket ranking statistics. However, generalizations of the breakdown function for bucket rankings require the use of Kendall tau for buckets, which is  computationally impractical. 

\begin{contribution}
In \cref{sec:buckets} we propose an extension of the breakdown function for bucket rankings which is built upon a Hausdorff generalization of the Kendall tau distance. We also develop an optimization algorithm to approximate this breakdown function that overcomes the computational issue of having a piece-wise constant objective function.
\end{contribution}


We illustrate and show empirically that bucket rankings are more robust median estimators than rankings. However, finding the optimal bucket order statistic requires exhaustively searching the space of bucket rankings $\wO$, which is even larger than the space of permutations, of factorial cardinality, and therefore, it is totally infeasible. 

\begin{contribution}
In \cref{sec:our_stats} we propose a general method for robustifying medians: given a ranking median, our algorithm successively merges ``similar'' items together into the same bucket. 
We evaluate this statistic in \cref{sec:exps}, showing an improvement of robustness w.r.t. Kemeny's median without sacrificing its precision. 
\end{contribution}

\section{Robustness - Breakdown Function for Ranking and Bucket Rankings}
\label{sec:robustness}

This section first details how to apply the notion of \textit{breakdown function} $\varepsilon^\star_{d, p, T}$. This allows providing insights into the robustness of classical location statistics such as the Kemeny consensus. These results advocate for the introduction of a more robust type of statistics based on bucket orders that are also developed in this section.

\subsection{Breakdown Function for the Kemeny Consensus}\label{sec:bd_kem}

We explore the robustness of ranking medians $\sigma^{\rm med}_{d}(p)$ as defined in \cref{eq:ranking_median} for different metrics $d$ over $\pS$ as defined by the breakdown function $\varepsilon^\star_{d_\tau, p, T}$. In particular, it is possible to tightly sandwich the breakdown function for the Kemeny median.

\begin{restatable}{theorem}{thmbreakdownfunctionkemeny}\label{thm:breakdownfunctionkemeny}
For $p\in\cM_+^1(\pS)$, ~ $\sigma^\star_p = \sigma^{\rm med}_{d_\tau}(p)$ (Kemeny median) and $\delta\geq0$, if $\varepsilon^+(\delta) \leq 2 p(\sigma_{p}^{*})$ then $\varepsilon^\star_{d_\tau, p,\sigma^\star_p}(\delta) \leq \varepsilon^+(\delta)$ with
    \begin{align*}
        \varepsilon^+(\delta) =
    \min_{\substack{\sigma \in\pS \\ d_\tau(\sigma, \sigma^\star_p) \geq \delta}} 
    \max_{\substack{\nu\in\pS \\ d_\tau(\nu, \sigma^\star_{p}) < \delta}} 
    \frac{\lE_{\Sigma\sim p}\left[d_\tau(\Sigma, \sigma) - d_\tau(\Sigma, \nu)\right]}
    {d_\tau(\sigma^\star_p, \sigma) - d_\tau(\sigma^\star_p, \nu) }\,.
    \end{align*}
\end{restatable}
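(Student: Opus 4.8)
The plan is to prove the upper bound by exhibiting an explicit contamination. Since $\varepsilon^\star_{d_\tau,p,\sigma^\star_p}(\delta)$ is an infimum over budgets for which \emph{some} $q$ moves the median by at least $\delta$, it suffices to produce a distribution $q$ with ${\rm TV}(p,q)\le\varepsilon^+(\delta)$ such that $d_\tau(\sigma^\star_p,\sigma^{\rm med}_{d_\tau}(q))\ge\delta$. First I would reduce this geometric goal to an analytic one: if a single ranking $\sigma$ with $d_\tau(\sigma,\sigma^\star_p)\ge\delta$ satisfies $\lE_{\Sigma\sim q}[d_\tau(\Sigma,\sigma)]<\lE_{\Sigma\sim q}[d_\tau(\Sigma,\nu)]$ for \emph{every} $\nu$ in the open ball $\{\nu:d_\tau(\nu,\sigma^\star_p)<\delta\}$, then no element of that ball can minimise $\nu\mapsto\lE_{\Sigma\sim q}[d_\tau(\Sigma,\nu)]$, so every Kemeny median of $q$ lies at distance at least $\delta$ from $\sigma^\star_p$. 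Thus it is enough to beat the whole $\delta$-ball simultaneously with one far ranking, which in particular includes $\nu=\sigma^\star_p$.

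For the attack I would let $\sigma^+$ attain the outer minimum defining $\varepsilon^+(\delta)$ and transfer a mass $c$ from $\sigma^\star_p$ onto $\sigma^+$, i.e. take the distribution obtained from $p$ by decreasing its mass at $\sigma^\star_p$ by $c$ and increasing its mass at $\sigma^+$ by $c$. Crucially this contamination is chosen independently of the competitor $\nu$, which is what allows a single $q$ to handle the entire ball. Writing $N(\nu)=\lE_{\Sigma\sim p}[d_\tau(\Sigma,\sigma^+)-d_\tau(\Sigma,\nu)]$ and $D(\nu)=d_\tau(\sigma^\star_p,\sigma^+)-d_\tau(\sigma^\star_p,\nu)$, linearity in $q$ gives $\lE_{\Sigma\sim q}[d_\tau(\Sigma,\sigma^+)-d_\tau(\Sigma,\nu)]=N(\nu)-c\,D(\nu)-c\,d_\tau(\sigma^+,\nu)$, since draining mass at $\sigma^\star_p$ contributes $-c\,D(\nu)$ and depositing it at $\sigma^+$ contributes $c\,(d_\tau(\sigma^+,\sigma^+)-d_\tau(\sigma^+,\nu))=-c\,d_\tau(\sigma^+,\nu)$.

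The crux is turning this identity into the stated bound. By optimality of $\sigma^+$ one has $N(\nu)\le\varepsilon^+(\delta)\,D(\nu)$ for every $\nu$ in the ball, with $D(\nu)>0$; moreover the triangle inequality for $d_\tau$ yields $d_\tau(\sigma^+,\nu)\ge d_\tau(\sigma^\star_p,\sigma^+)-d_\tau(\sigma^\star_p,\nu)=D(\nu)$. Hence $\lE_{\Sigma\sim q}[d_\tau(\Sigma,\sigma^+)-d_\tau(\Sigma,\nu)]\le N(\nu)-2c\,D(\nu)$, which is negative for all $\nu$ as soon as $c$ exceeds $\varepsilon^+(\delta)/2$. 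This is exactly where the factor two enters: the competitor-favouring mass that must be drained from $\sigma^\star_p$ is only $\varepsilon^+(\delta)/2$, so the transfer is feasible precisely under the hypothesis $\varepsilon^+(\delta)\le 2p(\sigma^\star_p)$, which guarantees $p(\sigma^\star_p)\ge c$. The ratio is well posed and $\sigma^+$ legitimate because the competitor $\nu=\sigma^\star_p$ already forces $N(\sigma^\star_p)\ge 0$ (optimality of the median) and $D(\sigma^\star_p)=d_\tau(\sigma^\star_p,\sigma^+)\ge\delta>0$, so the inner maximum is attained at a competitor with nonnegative numerator and the triangle-inequality step is sign-correct. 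Letting $c$ decrease to $\varepsilon^+(\delta)/2$, the total variation of the transfer is at most $\varepsilon^+(\delta)$, yielding $\varepsilon^\star_{d_\tau,p,\sigma^\star_p}(\delta)\le\varepsilon^+(\delta)$.

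The main obstacle I anticipate is the passage from pointwise domination to a statement about the \emph{argmin}: one must ensure a single contamination beats every near competitor at once and must control non-uniqueness of the Kemeny median by making the domination \emph{strict}, which is what forces the slight perturbation $c\downarrow\varepsilon^+(\delta)/2$ together with $d_\tau(\sigma^+,\nu)>0$. A secondary subtlety is bookkeeping the total variation of the transfer against the normalisation used for the budget, and confirming that the only place the hypothesis on $p(\sigma^\star_p)$ is consumed is the feasibility constraint $c\le p(\sigma^\star_p)$ on the amount of mass removable from $\sigma^\star_p$.
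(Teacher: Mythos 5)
Your proof is correct and follows the same overall strategy as the paper's: exhibit one explicit attack that drains mass of order $\varepsilon^+(\delta)/2$ from $\sigma^\star_p$ and deposits it on a single far permutation, then show this makes the chosen $\sigma^+\in S_\delta$ (weakly, then strictly) dominate every competitor in the open $\delta$-ball, so that the new median must leave the ball; the hypothesis $\varepsilon^+(\delta)\le 2p(\sigma^\star_p)$ is consumed exactly where you say, namely feasibility of removing that mass. The one genuine difference is the key computational step. The paper deposits the mass on the \emph{reverse} permutation $\sigma^{\star,{\rm rev}}_p$ and uses the Kendall-tau identity $d_\tau(\cdot,\sigma^{\star,{\rm rev}}_p)=\|d_\tau\|_\infty-d_\tau(\cdot,\sigma^\star_p)$, so the deposit contributes exactly $-c\,D(\nu)$ per competitor and the chain of implications becomes a chain of equivalences (yielding $\inf\tilde E=\varepsilon^+(\delta)$ for that family of attacks). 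You instead deposit the mass on $\sigma^+$ itself and lower-bound its contribution by the reverse triangle inequality $d_\tau(\sigma^+,\nu)\ge D(\nu)$. Your variant uses no structure specific to the Kendall tau and would give the same upper bound for the median associated with any metric on $\mathfrak{S}_n$, at the price of an inequality where the paper has an identity; the paper's variant is tailored to $d_\tau$ but makes the bound an exact characterization of what its particular attack achieves. Your explicit handling of ties by taking $c$ strictly above $\varepsilon^+(\delta)/2$ and passing to the limit is if anything slightly more careful than the paper's use of non-strict inequalities when asserting $\argmin\subseteq S_\delta$.
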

\begin{proofsketch} Detailed Proof can be found in \cref{app:breakdown_function_kemeny_ub}
The proof relies on showing that, for $\varepsilon>0$, the \emph{attack} distribution $\bar{q}_\varepsilon = p - \frac{\varepsilon}{2}\indicator{\cdot = \sigma^*_p} + \frac{\varepsilon}{2}\indicator{\cdot = \sigma^{\star, {\rm rev}}_p}$, where $\sigma^{\star, {\rm rev}}_p$ is the reverse of $\sigma^\star_p$, is in the feasible set of the optimization problem $\sup_{q: \textsc{tv}(p,q)\leq \varepsilon}d_\tau(\sigma^*_p, \sigma^*_{q})$ (see \cref{def:breakdown_function}). 

Using $\bar{q}_\varepsilon$ provides a way to link $\varepsilon$ and $\delta$.
The condition $\varepsilon^+(\delta) \leq 2 p(\sigma^\star_p)$ ensures $\bar{q}_\varepsilon$ is well-defined.
\end{proofsketch}

It is also possible to provide a lower bound on the breakdown function for any generic ranking median.

\begin{restatable}{theorem}{thmubbreakdownfunctionmedian}\label{thm:ubbreakdownfunctionmedian}
For $p\in\cM_+^1(\pS)$, $m$ and $d$ being two metrics on $\pS$, ~ $\sigma^\star_p = \sigma^{\rm med}_{d}(p)$ and $\delta\geq0$, we have $\varepsilon^\star_{m, p,\sigma^\star_p}(\delta) \geq \varepsilon^-(\delta)$ with
    \begin{align*}
        \varepsilon^-(\delta) =
    \min_{\substack{\sigma \in\pS \\ m(\sigma, \sigma^\star_p) \geq \delta}} 
    \max_{\substack{\nu\in\pS \\ \nu \neq \sigma}} 
    \frac{\lE_{\Sigma\sim p}\left[d(\Sigma, \sigma) - d(\Sigma, \nu)\right]}
    {\max_{\sigma'\in\pS} d(\sigma', \sigma) - d(\sigma', \nu)}
    \end{align*}
\end{restatable}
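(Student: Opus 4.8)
The plan is to prove the bound by contraposition, arguing about the budget directly rather than about the infimum. I will show that \emph{any} corrupted distribution $q$ whose $d$-median is displaced by at least $\delta$, i.e.\ with $m(\sigma^\star_q,\sigma^\star_p)\geq\delta$, must already spend budget ${\rm TV}(p,q)\geq\varepsilon^-(\delta)$. Since $\pS$ is finite, the median $\sigma^\star_q$ ranges over a finite set, so $q\mapsto m(\sigma^\star_p,\sigma^\star_q)$ takes finitely many values and the supremum in \cref{def:breakdown_function} is a maximum. Hence, once we know that every $q$ realizing displacement $\geq\delta$ lies at TV-distance at least $\varepsilon^-(\delta)$, then for every $\varepsilon<\varepsilon^-(\delta)$ the quantity $\sup_{q:{\rm TV}(p,q)\leq\varepsilon}m(\sigma^\star_p,\sigma^\star_q)$ is a maximum over displacements that are all strictly below $\delta$, so such $\varepsilon$ is not admissible in the infimum defining $\varepsilon^\star_{m,p,\sigma^\star_p}(\delta)$; this gives $\varepsilon^\star_{m,p,\sigma^\star_p}(\delta)\geq\varepsilon^-(\delta)$.

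So fix such a $q$ and write $\sigma:=\sigma^\star_q$, so that $m(\sigma,\sigma^\star_p)\geq\delta$, i.e.\ $\sigma$ is feasible for the outer minimization in $\varepsilon^-(\delta)$. The first key step is the \emph{optimality of the median under $q$}: by definition $\sigma=\sigma^\star_q$ minimizes $\nu\mapsto\lE_{\Sigma\sim q}[d(\Sigma,\nu)]$ over $\pS$, so for every $\nu\neq\sigma$,
\begin{equation*}
\lE_{\Sigma\sim q}\big[d(\Sigma,\sigma)-d(\Sigma,\nu)\big]\leq 0 .
\end{equation*}
The task is then to convert this inequality, which holds under $q$, into a lower bound on how far $q$ must be from $p$, exploiting that the same quantity is nonnegative under $p$ for a well-chosen competitor (e.g.\ $\nu=\sigma^\star_p$, by optimality of $\sigma^\star_p$ under $p$), which is what keeps the maximized ratio nonnegative.

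The second key step is a transport/duality estimate controlling how much an expectation can move across a TV-ball. Writing $f_\nu(\cdot)=d(\cdot,\sigma)-d(\cdot,\nu)$, the change $\lE_{\Sigma\sim p}[f_\nu]-\lE_{\Sigma\sim q}[f_\nu]$ is bounded by ${\rm TV}(p,q)$ times $\max_{\sigma'\in\pS}\big(d(\sigma',\sigma)-d(\sigma',\nu)\big)$; moreover, by the triangle inequality $|f_\nu(\sigma')|=|d(\sigma',\sigma)-d(\sigma',\nu)|\leq d(\sigma,\nu)$ with the value $d(\sigma,\nu)>0$ attained at $\sigma'=\nu$, so $\max_{\sigma'\in\pS}\big(d(\sigma',\sigma)-d(\sigma',\nu)\big)=d(\sigma,\nu)$ is exactly the (positive) denominator in $\varepsilon^-$. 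Combining with the median inequality $\lE_{\Sigma\sim q}[f_\nu]\leq0$ gives, for each $\nu\neq\sigma$,
\begin{equation*}
\lE_{\Sigma\sim p}[f_\nu]\leq\lE_{\Sigma\sim p}[f_\nu]-\lE_{\Sigma\sim q}[f_\nu]\leq{\rm TV}(p,q)\,\max_{\sigma'\in\pS}\big(d(\sigma',\sigma)-d(\sigma',\nu)\big),
\end{equation*}
hence ${\rm TV}(p,q)\geq\lE_{\Sigma\sim p}[f_\nu]\big/\max_{\sigma'}\big(d(\sigma',\sigma)-d(\sigma',\nu)\big)$. Since this holds for every $\nu\neq\sigma$ with the same left-hand side, I may take the maximum over $\nu$ on the right; and since $\sigma$ was an arbitrary feasible target, the bound survives the outer minimization over $\{\sigma:m(\sigma,\sigma^\star_p)\geq\delta\}$, yielding ${\rm TV}(p,q)\geq\varepsilon^-(\delta)$ and closing the argument.

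The step I expect to be the main obstacle is the transport estimate, and in particular pinning its constant to the paper's normalization of ${\rm TV}$, so that the denominator is the one-sided quantity $\max_{\sigma'}\big(d(\sigma',\sigma)-d(\sigma',\nu)\big)=d(\sigma,\nu)$ rather than the full oscillation $2\,d(\sigma,\nu)$; this is precisely where the exact definition of the total variation distance must be used carefully, since an off-by-a-factor here would corrupt the claimed lower bound. A secondary point worth stating cleanly is the reduction in the first paragraph: the passage from ``no feasible $q$ displaces the median by $\delta$'' to a lower bound on the infimum relies on the finiteness of $\pS$ (which turns the inner supremum into an attained maximum, hence strictly below $\delta$), together with the existence of at least one admissible competitor $\nu$ (such as $\sigma^\star_p$) guaranteeing the maximized ratio is nonnegative.
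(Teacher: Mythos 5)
Your proposal is correct in outline and follows essentially the same route as the paper's proof in \cref{app:breakdown_function_median_lb}: take any $q$ whose $d$-median lands in $S_\delta=\{\sigma\,:\,m(\sigma,\sigma^\star_p)\geq\delta\}$, use the optimality of $\sigma^\star_q$ under $q$ to get $\mathbb{E}_{\Sigma\sim q}[d(\Sigma,\sigma)-d(\Sigma,\nu)]\leq 0$ for every $\nu$, transfer this expectation from $q$ back to $p$ by a H\"older-type estimate, and then take $\max_\nu$ and $\min_\sigma$. Your preliminary remark that finiteness of the permutation group turns the supremum in \cref{def:breakdown_function} into an attained maximum is left implicit in the paper but is the right way to justify passing from the supremum condition to ``there exists $q$''.

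The step you flag as the obstacle is indeed the one delicate point, and your suspicion about the constant is justified. The sharp transport estimate is $\mathbb{E}_p[f_\nu]-\mathbb{E}_q[f_\nu]\leq \mathrm{TV}(p,q)\,\big(\max f_\nu-\min f_\nu\big)=2\,\mathrm{TV}(p,q)\,d(\sigma,\nu)$; the one-sided version you wrote, with $\mathrm{TV}(p,q)\max_{\sigma'} f_\nu(\sigma')$ on the right, is false in general (take $p$ and $q$ to be point masses at $\nu$ and $\sigma$: the left side is $2d(\sigma,\nu)$ while the right side is $d(\sigma,\nu)$). The paper performs the equivalent computation by writing $q-p=q_+-q_-$ and bounding $(\sigma-\nu)^\top D(q_--q_+)\leq\|q_+-q_-\|_1\,\|D(\sigma-\nu)\|_\infty$, then asserting $\|q_+-q_-\|_1\leq\varepsilon$; since $q_+$ and $q_-$ have disjoint supports, $\|q_+-q_-\|_1=\|p-q\|_1$, which under the normalization $\mathrm{TV}(p,q)=\tfrac12\|p-q\|_1$ used in \eqref{eq:lagrangian_relax_bkdwn} is bounded by $2\varepsilon$, not $\varepsilon$. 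So the paper's own argument carries exactly the factor-of-two ambiguity you anticipated: as written, both proofs only establish $\varepsilon^\star_{m,p,\sigma^\star_p}(\delta)\geq\varepsilon^-(\delta)/2$ under the halved normalization, and the stated bound holds verbatim only if $\mathrm{TV}$ is read as the un-halved $\ell_1$ distance. Apart from this shared issue, your remaining steps (the identification $\max_{\sigma'}\big(d(\sigma',\sigma)-d(\sigma',\nu)\big)=d(\sigma,\nu)$, and the choice $\nu=\sigma^\star_p$ guaranteeing the maximized ratio is nonnegative) either match the paper or are harmless additions.
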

\begin{proof}
Detailed proof can be found in \cref{app:breakdown_function_median_lb}.
\end{proof}

\begin{figure}[htbp]
    \centering
    \includegraphics[width=0.45\textwidth]{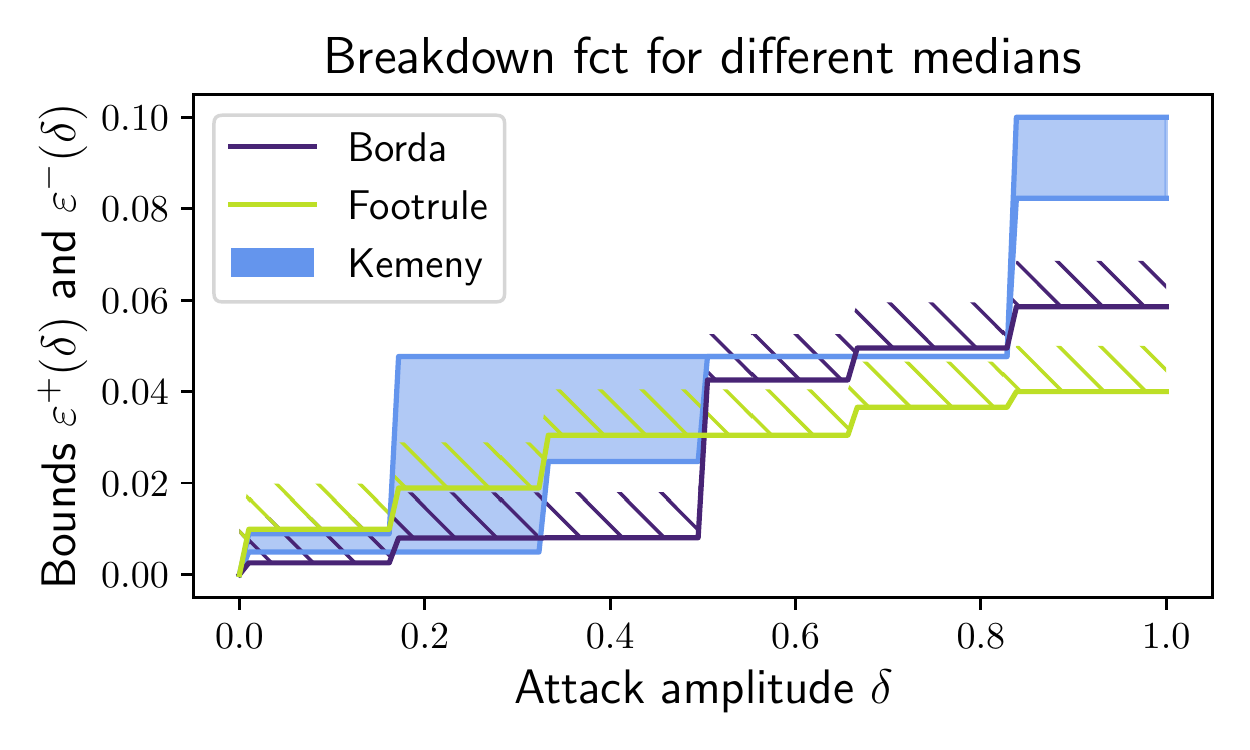}
    \caption{An illustration of $\varepsilon^+(\delta)$ and $\varepsilon^-(\delta)$ (from \cref{thm:breakdownfunctionkemeny} and \cref{thm:ubbreakdownfunctionmedian}) for a distribution on permutations of 4 items. For Borda and the median associated with Spearman footrule, only the lower bound is displayed.}
    \label{fig:breakdown_of_medians}
\end{figure}
\cref{fig:breakdown_of_medians} shows that no choice of $d$ makes the median uniformly more robust than another. Then, unfortunately, it also illustrates the fragility of median statistics against corruption of the distribution. In this example, impacting the distribution $p$ by less than $5\%$ allows changing the Kemeny median by flipping more than half item pairs ($\delta \geq 0.5$).

\paragraph{Sensitivity to similar items.} 
To further illustrate the fragility of Kemeny's median, \cref{fig:kemeny_on_indifference} shows its breakdown function on specific distributions. As could be expected, if all items are almost indifferent (uniform distribution - purple curve), then a ranking median is very fragile: a small nudge on $p$ is enough to change the Kemeny median from one ranking to its reverse. On the contrary, when $p$ is a point mass at a given ranking (blue curve), it requires a large attack on $p$ to impact the median. 

The green curve shows a weakness in the median: despite $p$ being concentrated on two neighbouring rankings (identical up to a pair of adjacent items), the robustness is very low for $\delta \leq 0.2$. This highlights a mechanism underlying adversarial attacks in real-world recommender systems (ex: fake reviews...): at a small cost, it is possible to be systematically ranked on top of close alternatives. This calls for using the natural alternative to (strict) rankings, which incorporates indifference between items: \emph{bucket rankings}.

\begin{figure}[htbp]
    \centering
    \includegraphics[width=0.45\textwidth]{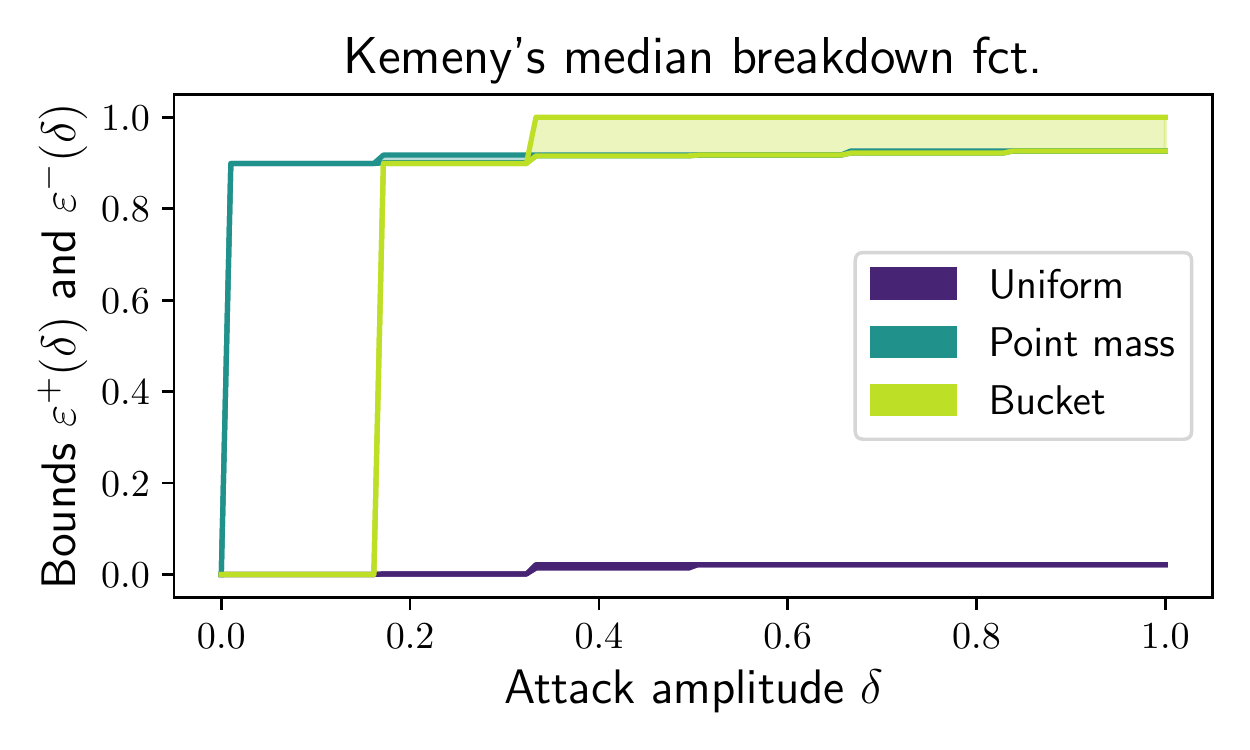}
    \caption{Breakdown function for Kemeny's median for different distributions $p$. "Uniform" denotes an almost uniform distribution; "Point mass" an almost point mass distribution, and "Bucket" an almost point mass distribution on two neighboring rankings.}
    \label{fig:kemeny_on_indifference}
\end{figure}

\subsection{Bucket Ranking - Extended Ranking Consensus}\label{sec:buckets}

Intuitively, bucket rankings are rankings with ties allowed. Formally, they can equivalently be defined as a total preorder -- \emph{i.e.} a homogeneous binary relation that satisfies transitivity and reflexivity (preorder) in which any two elements are comparable (total) -- or as a strict weak ordering -- \emph{i.e.} a strict total order over equivalence classes of items (buckets).

\begin{restatable}{definition}{defbucketrankings}\label{def:bucket_rankings}{\sc (Bucket ranking)}
A bucket order $\pi$ is a strict weak order defined by an ordered partition of $[n]$, \emph{i.e.} a sequence $(\pi^{(1)}, \dots , \pi^{(k)})$ of $k \geq 1$ pairwise disjoint non empty subsets (buckets) of $[n]$ such that: 
\begin{enumerate}
    \item[(i)] $i \prec_\pi j ~~\Leftrightarrow~~ \exists l<l' \in [k], (i,j) \in \pi^{(l)} \times \pi^{(l')}$,  
    \item[(ii)] $i \sim_\pi j ~~\Leftrightarrow~~ \exists l \in [k], (i,j) \in \pi^{(l)}\times\pi^{(l)}$,  
\end{enumerate}
We denote $\wO$ the set of bucket rankings, which is of size $\sum_{k=1}^n k! S(n,k)$\footnote{$S(n,k)$ are Stirling numbers of the second kind.} (vs $n!$ for $\pS$).
\end{restatable}
The indifference between items that bucket rankings can incorporate is an interesting feature to gain robustness, because the statistic can output alternatives between several strict orders, making it harder to attack.

\paragraph{As sets of permutations.} A bucket ranking $\pi\in\wO$ can be equivalently mapped to a subset of permutations, generated through the different ways to break ties. We say that a permutation $\sigma\in\pS$ is \emph{compatible} with a bucket ranking $\pi\in\wO$ -- denoted $\sigma\in\pi$ -- if for any $i,j\in[n]$, $\sigma(i)<\sigma(j) ~~\Leftrightarrow~~ i\prec_\pi j$ or $i\sim_\pi j$. For two bucket orders $\pi_1, \pi_2$, we say that $\pi_1$ is \emph{stricter} that $\pi_2$, denoted $\pi_1 \subseteq \pi_2$, iff for any $\sigma\in\pS, ~~\sigma\in\pi_1 \Rightarrow \sigma\in\pi_2$. 

\paragraph{As a distribution.} Being a set of permutations, a bucket order $\pi\in\wO$ can also be seen as a uniform distribution with restricted support. This point of view is particularly intuitive from a robustness perspective: a randomized output is generally harder to attack for an adversary.

\paragraph{Distances between bucket rankings.}
A key to applying the breakdown function from \cref{def:breakdown_function} to bucket orders statistics is to have a metric on $\wO$ that extends those defined on $\pS$.
To this end, we use the previous remark that weak orders are sets of rankings as well as a classical Hausdorff extension of metrics to sets. More precisely, we define:

\begin{restatable}{definition}{defnonsymmetrichausdorff}\label{def:non_symmetric_hausdorff}
    {\sc{(Non-symmetric Hausdorff)}} Let $d$ be a metric on $\pS$. The non-symmetric Hausdorff pseudoquasi-metric between two bucket rankings $\pi_1, \pi_2\in \wO$ is 
    \begin{align*}
        H^\text{\sc ns}_d(\pi_1, \pi_2) = \max_{\sigma_2 \in \pi_2} \min_{\sigma_1 \in \pi_1} d(\sigma_1, \sigma_2)  \,.      
    \end{align*}
\end{restatable}

Even though it is not a metric, $H^\text{\sc ns}_d$ is well-suited to ranking with ties. Intuitively, its lack of symmetry allows differentiating adversarial attacks whose effect is on the strict part of the bucket order (e.g. swapping two items that are strictly ordered) from those whose effect is "only" to disambiguate a tie. More precisely, if $\pi_2 \subseteq \pi_1$, then $H^\text{\sc ns}_d(\pi_1, \pi_2) = 0$. Depending on the application, one may want to focus on the first type of attacks, in which case $H^\text{\sc ns}_d$ is a suitable choice to define the breakdown function as $\varepsilon^\star_{H^\text{\sc ns}_d, p, T}$.
Otherwise, it is possible (and usual) to symmetrize the Hausdorff metric.
\begin{restatable}{definition}{defsymmetrichausdorff}\label{def:symmetric_hausdorff}
    {\sc{($1/2$-symmetric Hausdorff)}} Let $d$ be a metric on $\pS$. The $1/2$-symmetric Hausdorff metric between two bucket rankings $\pi_1, \pi_2\in \wO$ is defined by
    \begin{align*}
        H^{(1/2)}_d(\pi_1, \pi_2) = \frac{1}{2} \Big(H^\text{\sc ns}_d(\pi_1, \pi_2) + H^\text{\sc ns}_d(\pi_2, \pi_1)\Big)\,.
    \end{align*}
\end{restatable}
Usual symmetrization of the Hausdorff metric uses a maximum rather than an average \cite{fagin2006comparing}. However, under the Kendall-tau distance, the average version is computationally simpler (see \cref{app:hausdorff_kendall} for more details).

\subsection{The Breakdown Function in Ranking Data Analysis - Definition and Estimation}

\paragraph{Definition.}
Putting all the pieces together, from now on, the statistic $T : \cM_+^1(\pS) \to \wO$ 
 summarizes a distribution over $\pS$ by a bucket ranking in $\wO$. Then, we use either $H^{(NS)}_{d_\tau}(\pi_1, \pi_2)$ (see \cref{def:non_symmetric_hausdorff}) or $H^{(1/2)}_{d_\tau}(\pi_1, \pi_2)$ on $\wO$ where $d_\tau$ is the Kendall tau (see \cref{eq:kendall_tau}). Finally, the breakdown function $\varepsilon^\star_{H^{(NS)}_{d_\tau}, p, T}$ is the result of the following optimization problem
 \begin{align}
     \inf \left\{ \varepsilon > 0 \, \middle| \, \sup_{q : {\rm TV}(p,q) \leq \varepsilon} H^{(NS)}_{d_\tau}(T(p), T(q)) \geq \delta \right\}
     \label{eq:breakdown_fct_bucket}
 \end{align}

\paragraph{The Empirical Breakdown Function.}
Computing a closed-form expression for the breakdown point for any statistic $T$ and distribution $p$ is challenging in general. However, it can be estimated empirically: the extended expression of the breakdown function in \cref{eq:breakdown_fct_bucket} can be simplified so that it is the solution to the following Lagrangian-relaxed optimization problem.
\begin{equation}
    \inf_{q \in \Delta^{\frak{S}_n}} \sup_{\lambda \geq 0} 1/2 \| p-q \|_1 + \lambda(\delta - H^{(NS)}_{d_\tau}(T(p), T(q)) )
    \label{eq:lagrangian_relax_bkdwn}
\end{equation}
\paragraph{Smoothing.} As $H^{(NS)}_{d_\tau}(T(p), T(q)))$ is piece-wise constant as a function of $q$ (with a combinatorial number of pieces), Problem \eqref{eq:lagrangian_relax_bkdwn} cannot directly be solve using standard optimization techniques.
To solve this issue, we used a smoothing procedure by convolving this function with a smoothing kernel $k_\gamma$ with scale $\gamma$. Thus, after the relaxation, the optimization problem \eqref{eq:lagrangian_relax_bkdwn} becomes:
\begin{equation}
    \inf_{q \in \Delta^{\frak{S}_n}} \sup_{\lambda \geq 0} 1/2 \| p-q \|_1 + \lambda(\delta - \rho_T(p, q) ),
    \label{eq:smoothing_version_bkdwn}
\end{equation}
with 
\begin{equation}
    \begin{split}
        \rho_T(p,q) &= H^{(NS)}_{d_\tau}(T(p), T(q)) \star k_{\gamma}(q) \\
        &= \int_{u} H^{(NS)}_{d_\tau}(T(p), T(u)) \times k_{\gamma}(q-u) \text{d}u,
    \end{split}
\end{equation}
On a practical note, a simple way to build a convolution kernel $k_\gamma$ on a simplex like $\distribs$, is to use a convolution kernel $\kappa_\gamma$ on the whole euclidean space -- for instance an independent Gaussian density $\kappa_{\gamma}(x) = \frac{1}{\sqrt{ (2 \pi \gamma)^{n!} }} \exp{( -\frac{x^{\text{T}} x }{2 \gamma^2} )}$ 
-- and set $k_\gamma$ to be the density of the push-forward through a \emph{softmax} function. We denote $\varepsilon^\gamma_{p,T}(\delta)$ the limiting value of $\|p-q\|_1/2$ at the solution of \eqref{eq:smoothing_version_bkdwn}. Note the bias induced by such definition of $k_\gamma$ fades away when $\gamma$ goes to $0$ in the same way as the bias induced by the convolution.
This smoothing ensures $\rho_T$ is a continuous, differentiable function with respect to $q$. Moreover, it can easily be estimated using a Monte-Carlo sampling, using the following remark: $\rho_T(p,q) = \mathbb{E}_{u \sim k_{(p, \gamma)}} (H^{(NS)}_{d_\tau}(T(u), T(q))$.

\paragraph{Optimization.} When using Monte-Carlo estimation for $\rho_T$, \cref{eq:smoothing_version_bkdwn} is a stochastic saddle-point problem. To solve such problems, gradient/ascent has a rate of convergence of $\cO(t^{1/2})$ for its ergodic average ($t$ being the number of steps) \cite{Nemirovski2002}. 
Our empirical optimization algorithm for computing the breakdown functions relies on stochastic gradient descent and is able to provide good approximations, as illustrated in \cref{fig:theory_exp_kemeny_maxpair}. We denote $\hat{\varepsilon}^\gamma_{p,T}(\delta) = \|p - \bar{q}_t\|_1$, where $\bar{q}_t$ is the ergodic average of the iterates $(q_s)_{s\leq t}$ obtained during the optimization.

Let's make a couple of remarks on the empirical breakdown function $\hat{\varepsilon}^\gamma_{p,T}$. First, it is a noisy estimate of $\varepsilon^\gamma_{p, T}$ as $\rho_T$ and its gradients are estimated via Monte-Carlo. Thus, the choice of $\gamma$ and $t$ should trade-off the variance of $\hat{\varepsilon}^\gamma_{p,T}$ and the bias $|\varepsilon^\gamma_{p, T}-\varepsilon^\star_{d_\tau, p, T}|$. Second, as the term $\|p - q\|_1$ is minimized in \eqref{eq:smoothing_version_bkdwn}, it is expected $\hat{\varepsilon}^\gamma_{p,T}$ over-estimates $\varepsilon^\gamma_{p, T}$.

\section{Robust Consensus Ranking Statistics}
\label{sec:our_stats}

As proved by \cref{thm:breakdownfunctionkemeny}, the classical median statistics as defined by \eqref{eq:ranking_median} can be easily broken, which motivates defining more robust statistics, based on bucket rankings. As illustrated by \cref{fig:kemeny_on_indifference}, the weakness of median statistics comes from being ``forced" to rank all items, even those which are (almost) indistinguishable. Bucket rankings seem to be a natural solution to this problem, but \emph{what is a good way to build a bucket order statistic?}

As $H^{(NS)}_{d_\tau}$ defines a (pseudoquasi-) distance on $\wO$, we could adapt the idea of a median as in \eqref{eq:ranking_median} for bucket rankings. However, contrarily Borda medians which can be computed in a scalable way \cite{Caragiannis2013}, Hausdorff-based medians would require to optimize over $\wO$. As its cardinality is larger than $\frak{S}_n$ this problem can be more computationally challenging than Kemeny's median.

A more scalable approach is to start from a ranking median such as the Kemeny or Borda consensus and to robustify it using a plug-in method based on merging items that are close into buckets. \cref{fig:merge_limit} illustrates this idea. The left graph describes pairwise marginal probabilities for which the Kemeny consensus is $A\prec B\prec C\prec D$. Intuitively, merging either $C$ and $D$ (as $\lP(C \prec D) = 0.51)$ or $B$ and $C$ (as $\lP(B \prec C) = 0.52)$ leads to bucket rankings (i) and (ii), which will be harder to attack. However, this example also highlights that there is no unique way of merging items. For instance, if the constraint is to only merge items whose pairwise preference probability is in $[0.4, 0.6]$, it is possible to merge $B,C$ or $C,D$, but not $B,C,D$ as $\lP(B\prec D) = 0.7$: \emph{pairwise indistinguishability is not transitive}.

\begin{figure*}
\centering
\input{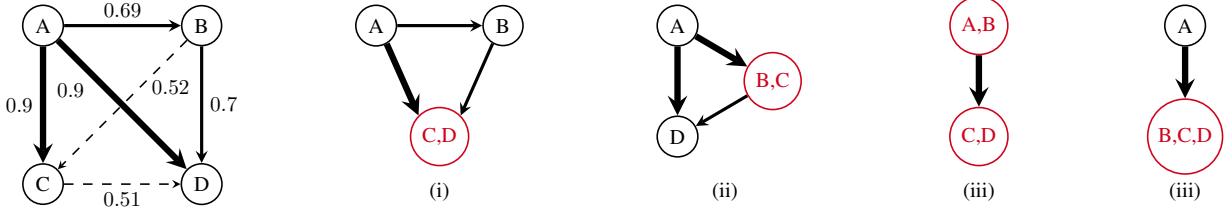}        
\caption{Left: Directed Graph that summarizes a pairwise marginal probability matrix. (i-iv) Graph representations of bucket orders that are compatible with merging items which pairwise preference probability is below 0.52 (i, ii) and below 0.7 (iii,iv).}
\label{fig:merge_limit}
\end{figure*}

\subsection{Na\"ive Merge Statistic}

In order to formalize the latter intuition and to derive a first (na\"ive) plug-in rule, we define the pairwise preference probability between two items, which provides a relevant notion of closeness between items.
\begin{definition}
    {\sc (Pairwise probabilities)}. For $p\in\cM_+^1(\pS)$, the pairwise preference probability between items $i \text{ and } j$, denoted $P_{i,j}$, is defined for $i\neq j$ by: $P_{i,j} = \mathbb{P}_{\Sigma \sim p}(\Sigma(i) < \Sigma(j))$. By convention, $P_{ii} = 0.5$. We define the pairwise matrix of $p$ as $P:= [P_{i,j}]_{1 \leq i,j \leq n}$.
    \label{def:pairwise_matrix}
\end{definition}
Then, given a bucket ranking $\pi\in\wO$, we  formalize the notion that two buckets can be merged, with the constraint of not changing the strict order between buckets. To this end, we define $\bar{P}_{i}(\pi)$, the \emph{strongest deviation from indifference} between any two items within the $i^{\rm th}$ bucket $\pi^{(i)}$.
\begin{align}
\bar{P}_{i}(\pi) = \max \left\{\left|P_{l,l'} - 0.5\right| : (l,l')\in \pi^{(i)}\right\}
\end{align}
Then, one needs to quantify the value of $\bar{P}_{i}(\pi)$ that would result from merging bucket $i$ to bucket $j$,
\begin{align}
\bar{P}_{ij}(\pi) = \max \left\{\left|P_{l,l'} - \frac{1}{2}\right| : (l,l')\in \bigcup_{\substack{l\in[n]\\i\leq l \leq j}}\pi^{(l)}\right\}
\end{align}
Finally, given a threshold $\theta\in[0,0.5]$ on the acceptable deviation from indifference, we define the set of pairs of buckets that can be merged while keeping $\bar{P}$ below $\theta$,
\begin{align}
    \cG(\pi,\theta) = \left\{(i,j)\in [n]^2: \bar{P}_{ij}(\pi) \leq \theta\right\}
\end{align}
The first intuition is to merge buckets iteratively, starting with the most indifferent ones, as described in \cref{algo_naive_merge}.

\begin{algorithm}
\DontPrintSemicolon
\SetKwInOut{Input}{Input}
\SetKwInOut{Output}{Output}
\Input{Pairwise matrix $P$, Ranking median $\sigma$, threshold $\theta \in [0, 0.5]$.}
$\pi \gets \sigma$ \tcp*{$\sigma$ as a bucket ranking}
\While{$\cG(\pi, \theta) \neq \emptyset$}{
    $(i^*, j^*) = \argmin_{(i,j)\in\cG(\pi,\theta)} \bar{P}_{ij}(\pi)$ \;
    update $\pi$ by merging all buckets between $i^*$ and $j^*$
    \vskip -2em
    \begin{flushleft}
        \begin{flalign*} 
            \begin{cases}
                \pi^{(i)} &\gets \pi^{(i)} ~~~\text{for}~ i < i^*\\
                \pi^{(i^*)} & \gets \bigcup_{l\in[n], i^*\leq l\leq j^*}\pi^{(l)}\\
                \pi^{(i - j^* + i^*)} & \gets \pi^{(i)} ~~~\text{for}~ i > j^*
            \end{cases}&&
        \end{flalign*}
    \end{flushleft}
    }
    \Output{$\pi$}
\caption{Na\"ive Merge}
\label{algo_naive_merge}
\end{algorithm}

Termination of \cref{algo_naive_merge} is guaranteed by the fact that the number of buckets in $\pi$ strictly decreases at each iteration. Then, by definition of $\cG(\pi, \theta)$, the resulting bucket ranking $\pi$ is such that any of its bucket $i$ satisfies $\bar{P}_i(\pi) \leq \theta$ -- \emph{i.e.} no two items with higher deviation than $\theta$ have been merged.

Despite being very natural, this algorithm suffers from an important limitation: when changing the threshold $\theta$, its output only spans a limited subset of valid bucket rankings. In the example provided by \cref{fig:merge_limit}, the na\"ive merge method plugged-in on the Kemeny consensus can only output (i) and (iii). Whatever the value of $\theta$, it can never output (ii) or (iv). This limitation is induced by its outputs being a monotonic (w.r.t. to inclusion) function of $\theta$ -- \emph{i.e.} for $\theta_1 \leq \theta_2$, the resulting bucket rankings satisfy $\pi_{\theta_1} \subseteq \pi_{\theta_2}$.

\subsection{Downward Merge Statistic}

Overcoming this limitation only requires a small change in the algorithm which results in our main plug-in method named \emph{Downward Merge}, shown in \cref{algo_downward_merge}. 
Downward Merge algorithm selects the two buckets $(i^*, j^*)$ whose deviation from indifference $\bar{P}_{ij}(\pi)$ is maximal among those $\bar{P}_{ij}(\pi)\leq \theta$. \footnote{Instead of taking the most similar buckets, as in the previous statistic, we take the most different pair among those that are ``similar enough".} Then, all the buckets $l$ such that $i^*\leq l\leq j^*$ are merged. This process is repeated while there exist pairs of buckets whose deviation from indifference $\bar{P}_{ij}(\pi)\leq \theta$ and thus termination is guaranteed.

\begin{algorithm}
\DontPrintSemicolon
\SetKwInOut{Input}{Input}
\SetKwInOut{Output}{Output}
\Input{Pairwise matrix $P$, Ranking median $\sigma$, threshold $t \in [0, 0.5]$.}
$\pi \gets \sigma$ \tcp*{$\sigma$ as a bucket ranking}
\While{$\cG(\pi, t) \neq \emptyset$}{
    $(i^*, j^*) = \argmax_{(i,j)\in\cG(\pi,t)} \bar{P}_{ij}(\pi)$ \;
    update $\pi$ by merging all buckets between $i^*$ and $j^*$
    \vskip -2em
    \begin{flushleft}
        \begin{flalign*} 
            \begin{cases}
                \pi^{(i)} &\gets \pi^{(i)} ~~~\text{for}~ i < i^*\\
                \pi^{(i^*)} & \gets \bigcup_{l\in[n], i^*\leq l\leq j^*}\pi^{(l)}\\
                \pi^{(i - j^* + i^*)} & \gets \pi^{(i)} ~~~\text{for}~ i > j^*
            \end{cases}&&
        \end{flalign*}
    \end{flushleft}
    }
    \Output{$\pi$}
\caption{Downward Merge}
\label{algo_downward_merge}
\end{algorithm}

The Downward Merge method is thus able to span a larger set of bucket orders when varying $\theta$. In the example from \cref{fig:merge_limit}, the Downward Merge method plugged-in on the Kemeny consensus can generate all four bucket rankings (i-iv) for $\theta\in \{0.51, 0.52, 0.69, 0.7)\}$.

The next experimental section illustrates the robustness improvement brought by this plug-in method over a ranking median.

\section{Numerical Experiments}
\label{sec:exps}

In this section, we illustrate the relevance of the statistic outputted by our Downward Merge plug-in on Kemeny's median (called our \textit{Downward Merge statistic} for short) by running several illustrative experiments for various settings and comparing with the baseline provided by the usual Kemeny's median. The code is available \href{https://github.com/RobustConsensusRanking/RobustConsensusRanking}{here}. 

\subsection{Empirical Robustness}
\label{subsec:emp_rob}

\begin{figure}
\centering
\includegraphics[width=\linewidth]{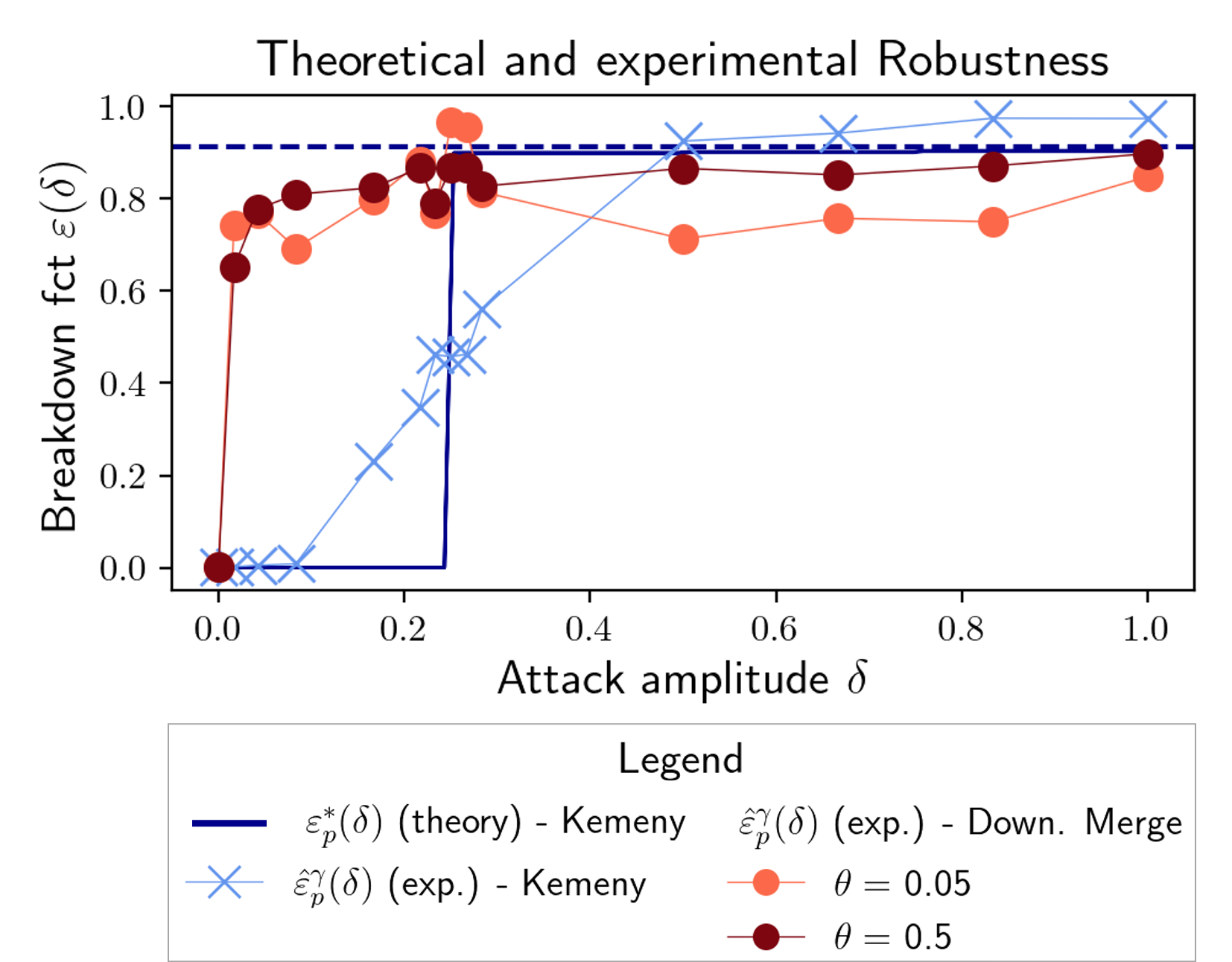}
    \caption{Breakdown function $\hat{\varepsilon}^{\gamma}_{p,T}(\delta)$ as a function of attack amplitude $\delta$ for a bucket distribution $p$ (almost a point mass on two neighboring rankings) with $n=4$. The plain blue line denotes the theoretical value for Kemeny's median $\varepsilon^*_{p}(\delta)$, blue crosses (resp. red dots) the empirical approximation $\hat{\varepsilon}^{\gamma}_{p,T}$ for Kemeny's median (resp. Down. Merge statistic for different thresholds $\theta$).}
    \label{fig:theory_exp_kemeny_maxpair}
\end{figure}

Our Downward Merge plug-in aims at providing a robustified statistic. To illustrate its usefulness, we ran experiments computing the approximate breakdown functions $\hat{\varepsilon}^{\gamma}_{p,T}(\delta)$ for the Kemeny's median as a baseline and our statistic when varying $\delta$. \cref{fig:theory_exp_kemeny_maxpair} shows the robustness as a function of attack amplitude $\delta$ and for a hand-picked distribution $p$ that is almost a point mass on a bucket ranking.


When the threshold is set to a sensible value (here $\theta = 0.05$), the Downward Merge algorithm outputs a bucket order as a statistic: thus, the robustness increases very strongly to reach nearly optimal values even for very small values of $\delta$, which illustrates its efficiency. When $\theta=0.5$, the statistic is the bucket order regrouping all items. In this case, the statistic cannot be broken, and provide optimal values for the breakdown function. However, such a statistic does not provide any information about the distribution under analysis: its accuracy of location is very poor. Formally, the accuracy of location of a statistic $T$ is defined by its closeness (under the same metric $d$ used in its definition) to the whole ranking distribution: $AL_{d, p}(T) := \| d \|_{\infty} -\mathbb{E}_{p}(d(T(p), \Sigma))$, which is the opposite of the \textit{loss}, as simply defined by $Loss_{d,p}(T) = \mathbb{E}_{p}(d(T(p), \Sigma))$. By definition, under metric $d = d_{\tau}$, Kemeny's median has the highest accuracy of location, \textit{i.e.} the smallest loss. On the other hand, the Downward Merge statistic when $\theta=0.5$ has a very high loss, which makes it irrelevant in most cases. These observations justify the analysis of the loss/robustness tradeoff of our Downward Merge statistic compared to Kemeny's median.

\subsection{Tradeoffs between Loss and Robustness}
\label{subsec:tradeoffs_classic}

\begin{figure}
\centering
    \includegraphics[width=\linewidth]{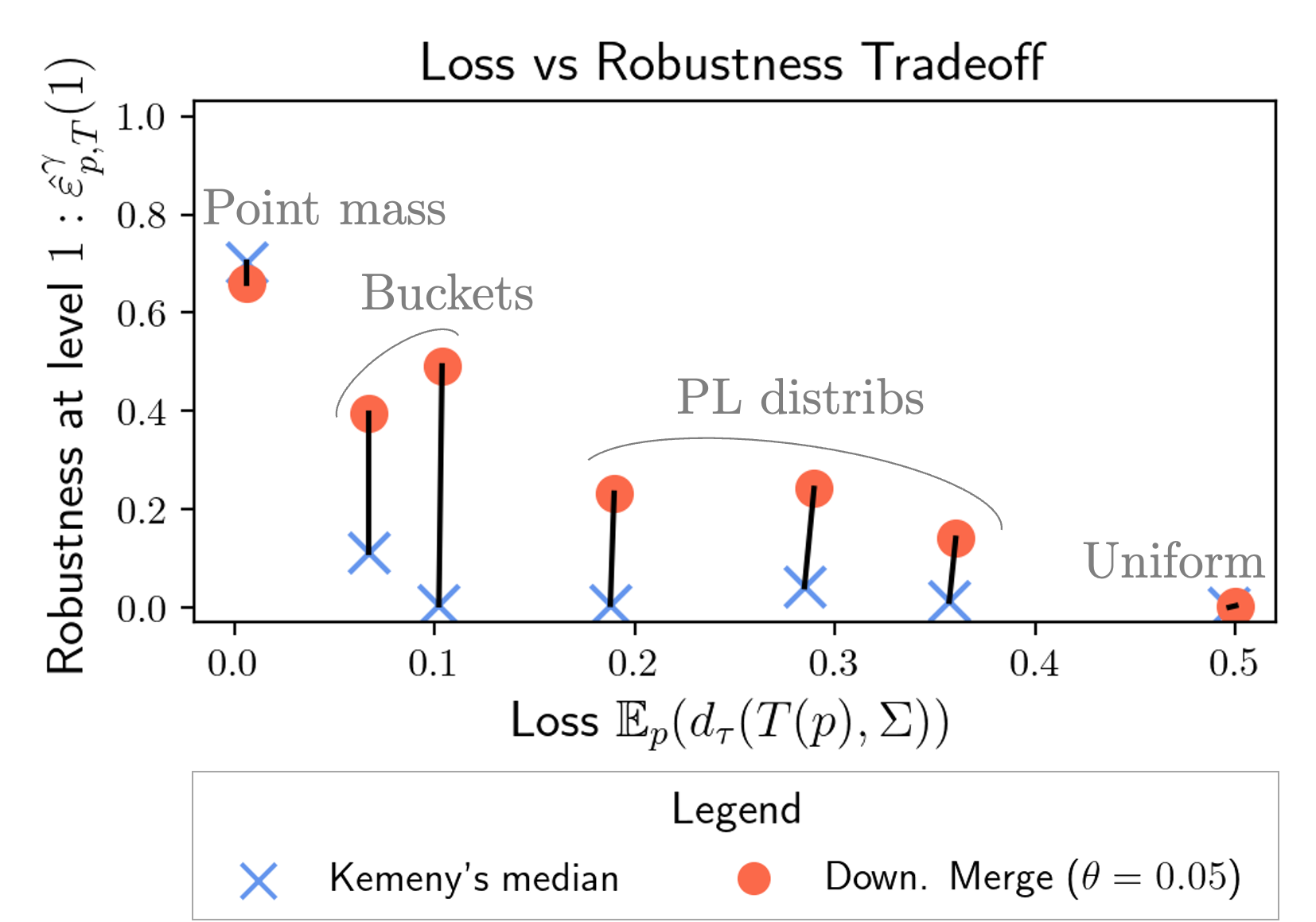}
\caption{Loss/Robustness tradeoffs for different $p$ with $\delta=1$. Pairs of points linked by a black line denote results for Kemeny's median and Down. Merge statistics on the same distribution $p$ with $n=4$. "Buckets" are hand-picked distributions generated to be almost a point mass on a bucket order, "Uniform" (resp. "Point mass") "is an almost uniform (resp. point mass) hand-picked distribution, and "PL distribs." are random Plackett-Luce distributions.}
\label{fig:res_classic}
\end{figure}

We ran experiments for various distributions $p$ and computed the loss and the breakdown function of Kemeny's median and our Downward Merge algorithm to show the loss/robustness tradeoff for each statistic. \cref{fig:res_classic} shows the results for different choices of distribution $p$ when the number of items $n=4$, and for $\delta=1/6$ (normalized value of $\delta$ that requires at least a switch between two items to break the statistic).

The point mass (resp. the uniform) distribution represents the extreme case for which Kemeny's median is very robust (resp. not robust at all) and for which we expect no improvement from using the Downward Merge statistic. This intuition is verified in both cases, and we can see that the Downward Merge statistic yields the same results (in loss and in robustness) as Kemeny's median.

The bucket distributions (for which the gap between the probabilities for two rankings in the bucket order is respectively $0.1$ and $0.01$) represent the settings to which our Downward Merge is best suited. As expected, the improvement in robustness when using our Downward Merge statistic is high, and the increase in loss is negligible.

Finally, the Plackett Luce distributions (for which the parameters were generated randomly) represent a random setting. The results are interestingly very similar to those for the bucket distributions: the gain in robustness is high and the increase in loss is negligible. This random setting illustrates the usefulness of our Downward Merge statistic in general cases and shows that, overall, it yields a much better compromise than Kemeny's median.

\section{Conclusion}

In this paper, we developed a framework to study robustness in ranks: we defined breakdown functions for rankings, extended it to bucket rankings, and created an optimization algorithm to approximate its value in practice. We developed our Downward Merge statistic as a plug-in to the classical Kemeny's median to provide, as confirmed by our experiments, not only an improved robustness but also a better compromise between centrality and robustness. We ensured our Downward Merge algorithm is scalable to practical settings, but the evaluation of the breakdown function remains challenging because of the use of the Total-Variation distance as a metric for the budget constraint. The definition and study of further scalable approximations of the breakdown function are left for future work.


\bibliography{icml_main}
\bibliographystyle{icml2023}

\newpage
\appendix
\onecolumn

\section{Additional Metrics on $\pS$}
\label{app:additional_metrics}

\paragraph{The Kendall Tau} is the metric used all along the main part of the paper, the proportion of misordered pairs,
\begin{align*}
    d_{\tau}(\sigma, \nu) = \frac{2}{n(n-1)}\sum_{i<j}\indicator{\sigma(i)<\sigma(j)}\indicator{\nu(i) > \nu(j)}\,.
\end{align*}
The Kemeny consensus is the median associated with the Kendall Tau metric.

\paragraph{The Spearman Rho} is a normalized quadratic distance between the rank vectors,
\begin{align}
    d_{\tau}(\sigma, \nu) = \frac{6}{n(n^2-1)}\sum_{i}\left(\nu(i) - \sigma(i)\right)^2\,.
\end{align}
The Borda count is the median associated with the Spearman Rho (\emph{e.g.} see \citet{calauzenes2013}).

\paragraph{The Spearman footrule} is a absolute value distance between the rank vectors,
\begin{align}
    d_{\tau}(\sigma, \nu) = \sum_{i}\left|\nu(i) - \sigma(i)\right|\,.
\end{align}

\section{Notation for Appendix}
\label{app:notation}
For the sake of clarity of the proofs, we switch to matrix notation in the appendix.
We fix an arbitrary indexation $\{\sigma^{(1)}, \dots, \sigma^{(n!)}\}$ of $\pS$. Using this indexation, given a metric $d$ on $\pS$, we can defined the (symmetric) metric matrix $D = (d(\sigma^{(i)},\sigma^{(j)}))_{i,j\in[n!]}$. Identifying a ranking $\sigma$ with its corresponding basis vector $\be_i$ s.t. $\sigma = \sigma^{(i)}$, we write for two rankings $\sigma, \sigma', \nu\in\pS$,
\begin{align}
    \nu^\top D \sigma := d(\nu, \sigma) ~~~~~~~\text{or}~~~~~~~ \nu^\top D (\sigma - \sigma') := d(\nu, \sigma) - d(\nu, \sigma')
\end{align}
Further, a distribution $p\in\cM_+^1(\pS)$ on permutation can now be seen as a $n!$-dimensional vector in $\mathbb{R}^{n!}$. This allows to write, for $p\in\distribs$, $\sigma\in\pS$,
\begin{align}
    p^\top D\sigma := \lE_{\Sigma\sim p}[d(\Sigma,\sigma)]
\end{align}

\section{Proof: Bound on Breakdown Function for Ranking Medians}
\label{app:breakdown_function_medians}

\subsection{Upper-bound}
\label{app:breakdown_function_kemeny_ub}

We first remind \cref{thm:breakdownfunctionkemeny}.
\thmbreakdownfunctionkemeny*

We re-state the theorem with the matrix notation defined in \cref{app:notation} and used all along the appendix.

\begin{theorem}
For $p\in\cM_+^1(\pS)$, $\sigma^\star_p = \sigma^{\rm med}_{d_\tau}(p)$ and $S_\delta = \{\sigma\in\pS | d_\tau(\sigma, \sigma^\star_p) \geq \delta\}$, if $\varepsilon^+(\delta) \leq 2 p(\sigma^\star_p)$, then $\varepsilon^\star_{d_\tau, p, \sigma^\star_p} \leq \varepsilon^+(\delta)$.
\begin{align}
    \varepsilon^+(\delta)
    =
    \min_{\sigma\in S_\delta}\max_{\nu\in N_\delta}
    \frac{\mkendall{(\sigma-\nu)}{p}}
    {\mkendall{(\sigma - \nu)}{\sigma^\star_p}}\,,
\end{align}

\end{theorem}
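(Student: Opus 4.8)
The plan is to prove the inequality by exhibiting a single witness: at budget $\varepsilon=\varepsilon^+(\delta)$ I would construct one contaminated distribution $\bar q$ with ${\rm TV}(p,\bar q)\leq\varepsilon$ whose Kemeny median sits at Kendall-tau distance at least $\delta$ from $\sigma^\star_p$. Because the breakdown function in \cref{def:breakdown_function} is an infimum over feasible budgets, producing one such witness immediately yields $\varepsilon^\star_{d_\tau,p,\sigma^\star_p}(\delta)\leq\varepsilon^+(\delta)$. Throughout write $C_q(\sigma)=\lE_{\Sigma\sim q}[d_\tau(\Sigma,\sigma)]$ for the consensus cost, so that $\sigma^\star_q=\argmin_{\sigma\in\pS}C_q(\sigma)$, and split $\pS=S_\delta\sqcup N_\delta$ with $S_\delta=\{\sigma:d_\tau(\sigma,\sigma^\star_p)\geq\delta\}$ and $N_\delta=\{\nu:d_\tau(\nu,\sigma^\star_p)<\delta\}$. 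The first step is a reduction: if some $\sigma\in S_\delta$ satisfies $C_q(\sigma)\leq C_q(\nu)$ for every $\nu\in N_\delta$, then $\min_{N_\delta}C_q\geq C_q(\sigma)\geq\min_{S_\delta}C_q$, so the global minimum of $C_q$ is attained in $S_\delta$ and hence $d_\tau(\sigma^\star_q,\sigma^\star_p)\geq\delta$. It therefore suffices to make one element of $S_\delta$ dominate the whole near set $N_\delta$.

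For the construction I follow the reverse attack of the proof sketch: set $\bar q_\varepsilon=p-\tfrac{\varepsilon}{2}\indicator{\cdot=\sigma^\star_p}+\tfrac{\varepsilon}{2}\indicator{\cdot=\sigma^{\star,{\rm rev}}_p}$, where $\sigma^{\star,{\rm rev}}_p$ is the reverse of $\sigma^\star_p$. This $\bar q_\varepsilon$ is a probability measure as soon as $\tfrac{\varepsilon}{2}\leq p(\sigma^\star_p)$, which at $\varepsilon=\varepsilon^+(\delta)$ is exactly the hypothesis $\varepsilon^+(\delta)\leq 2p(\sigma^\star_p)$, and it satisfies ${\rm TV}(p,\bar q_\varepsilon)\leq\varepsilon$. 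The crucial ingredient is the reversal identity $d_\tau(\sigma^{\star,{\rm rev}}_p,\rho)=1-d_\tau(\sigma^\star_p,\rho)$, valid for every $\rho\in\pS$ because reversing $\sigma^\star_p$ flips every pairwise comparison, so a pair is discordant with $\sigma^{\star,{\rm rev}}_p$ precisely when it is concordant with $\sigma^\star_p$. Substituting this identity collapses the attack-target term and gives
\begin{align*}
C_{\bar q_\varepsilon}(\sigma)=C_p(\sigma)-\varepsilon\,d_\tau(\sigma^\star_p,\sigma)+\tfrac{\varepsilon}{2},
\end{align*}
so that attacking toward the reverse rewards permutations far from $\sigma^\star_p$ linearly in their distance.

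With this expression, for any $\sigma\in S_\delta$ and $\nu\in N_\delta$ the additive constant cancels and
\begin{align*}
C_{\bar q_\varepsilon}(\sigma)-C_{\bar q_\varepsilon}(\nu)=\big[C_p(\sigma)-C_p(\nu)\big]-\varepsilon\big[d_\tau(\sigma^\star_p,\sigma)-d_\tau(\sigma^\star_p,\nu)\big].
\end{align*}
The bracket multiplying $\varepsilon$ is strictly positive, since $d_\tau(\sigma^\star_p,\sigma)\geq\delta>d_\tau(\sigma^\star_p,\nu)$; hence $C_{\bar q_\varepsilon}(\sigma)\leq C_{\bar q_\varepsilon}(\nu)$ holds iff $\varepsilon\geq \frac{C_p(\sigma)-C_p(\nu)}{d_\tau(\sigma^\star_p,\sigma)-d_\tau(\sigma^\star_p,\nu)}$, whose numerator $C_p(\sigma)-C_p(\nu)=\lE_{\Sigma\sim p}[d_\tau(\Sigma,\sigma)-d_\tau(\Sigma,\nu)]$ is exactly the numerator in $\varepsilon^+(\delta)$. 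Choosing $\sigma$ to be the outer minimizer defining $\varepsilon^+(\delta)$, the domination of $N_\delta$ by $\sigma$ is equivalent to $\varepsilon$ exceeding the maximum over $\nu\in N_\delta$ of this ratio, i.e.\ to $\varepsilon\geq\varepsilon^+(\delta)$; at $\varepsilon=\varepsilon^+(\delta)$ the reduction of the first paragraph then forces $\sigma^\star_{\bar q_\varepsilon}\in S_\delta$, which is the desired witness.

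The step I expect to be the main obstacle is the reduction lemma together with the treatment of equality at the extremal budget: one must argue that dominating only the near set $N_\delta$ (rather than all of $\pS$) suffices, and at $\varepsilon=\varepsilon^+(\delta)$ only $C_{\bar q_\varepsilon}(\sigma)\leq C_{\bar q_\varepsilon}(\nu)$ is guaranteed, so the global minimizer could a priori tie between $S_\delta$ and $N_\delta$. I would resolve this either through the $\argmin$ tie-breaking convention or, more safely, by running the argument at $\varepsilon=\varepsilon^+(\delta)+\eta$ to obtain strict domination and then letting $\eta\downarrow 0$ inside the infimum. It is also worth recording that $\varepsilon^+(\delta)\geq 0$ (seen by taking $\nu=\sigma^\star_p\in N_\delta$, for which the ratio is nonnegative because $\sigma^\star_p$ minimizes $C_p$), so the moved mass $\tfrac{\varepsilon}{2}$ is meaningful and the normalization in the reversal identity is consistent with $d_\tau$ taking values in $[0,1]$.
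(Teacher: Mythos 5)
Your proposal is correct and follows essentially the same route as the paper's proof: the same reverse-shift attack distribution $\bar q_\varepsilon = p - \tfrac{\varepsilon}{2}\indicator{\cdot=\sigma^\star_p}+\tfrac{\varepsilon}{2}\indicator{\cdot=\sigma^{\star,{\rm rev}}_p}$, the same reversal identity $d_\tau(\sigma^{\star,{\rm rev}}_p,\cdot)=1-d_\tau(\sigma^\star_p,\cdot)$, and the same reduction to the min--max ratio defining $\varepsilon^+(\delta)$. The tie-breaking subtlety you flag at the extremal budget is real but is present in the paper's own argument as well (its set $\tilde E$ requires the full $\argmin$ to lie in $S_\delta$ while the stated equivalence only yields non-strict domination of $N_\delta$), so your $\eta\downarrow 0$ fix is, if anything, slightly more careful.
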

\begin{proof}
    \begin{align}
        \varepsilon^\star_{d_\tau, p, \sigma^\star_p} & = \inf \left\{\varepsilon>0\middle|\sup_{q: \textsc{tv}(p, q)\leq \varepsilon}d_\tau(\sigma^\star_p, \sigma^\star_q) \geq \delta\right\} \\
        & = \inf \left\{\varepsilon>0\middle|\exists q, s.t. \textsc{tv}(p, q)\leq \varepsilon ~\text{and}~ d_\tau(\sigma^\star_p, \sigma^\star_q) \geq \delta\right\}\\
        & = \inf \underbrace{\left\{\varepsilon>0\middle|\exists q, s.t. \textsc{tv}(p, q)\leq \varepsilon ~\text{and}~ \argmin_{\sigma\in\pS}\mkendall{\sigma}{q} \subseteq S_\delta\right\}}_{=: E} ~~~\text{with}~S_\delta = \{\sigma\in\pS | d_\tau(\sigma, \sigma^\star_p) \geq \delta\}
    \end{align}
    Further, we define 
    $N_\delta = \pS \setminus S_\delta$, 
    $\sigma^{\star, {\rm rev}}_p$ the reverse of $\sigma^{\star}_p$, i.e., $\sigma^{\star, {\rm rev}}_p (i)=\sigma^{\star}_p(n-i-1)$ and the \emph{attack} distribution 
    ${\bar{q}_\varepsilon = p - \frac{\varepsilon}{2}\indicator{\cdot = \sigma^\star_p} + \frac{\varepsilon}{2}\indicator{\cdot = \sigma^{\star, {\rm rev}}_p}}$ that removes the probability mass from the median to put it on the farthest point. 
    We also define 
    ${{E} = \left\{\varepsilon | \argmin_{\sigma\in\pS}\mkendall{\sigma}{\bar{q}_\varepsilon} \subseteq S_\delta\right\}}$  and 
    ${\tilde{E} = \left\{0<\varepsilon \leq 2 p(\sigma^\star_p) \middle | \argmin_{\sigma\in\pS}\mkendall{\sigma}{\bar{q}_\varepsilon} \subseteq S_\delta\right\} \subseteq E \cap (0, 2 p(\sigma^\star_p)]}$.

    Let $\varepsilon>0$ be such that $\varepsilon \leq 2 p(\sigma^\star_p)$. Then
    \begin{align}
        \varepsilon\in\tilde{E} 
        & \Leftrightarrow \exists \sigma \in S_\delta, \forall \nu \in N_\delta, \mkendall{\sigma}{\bar{q}_\varepsilon} \leq \mkendall{\nu}{\bar{q}_\varepsilon}\\
        & \Leftrightarrow \exists \sigma \in S_\delta, \forall \nu \in N_\delta, \mkendall{(\sigma - \nu)}{p} + \frac{\varepsilon}{2}\left(\mkendall{\sigma^{\star, {\rm rev}}_p}{\sigma} - \mkendall{\sigma^{\star}_p}{\sigma} + \mkendall{\sigma^{\star}_p}{\nu} - \mkendall{\sigma^{\star, {\rm rev}}_p}{\nu}\right)\leq 0\\
        & \Leftrightarrow \exists \sigma \in S_\delta, \forall \nu \in N_\delta, \mkendall{(\sigma - \nu)}{p} 
        \leq 
        \frac{\varepsilon}{2}\left(
             \mkendall{(\sigma - \nu)}{(\sigma^\star_p - \sigma^{\star, {\rm rev}}_p)}
        \right)\\
        & \Leftrightarrow \exists \sigma \in S_\delta, \forall \nu \in N_\delta, \mkendall{(\sigma - \nu)}{p} 
        \leq 
        \varepsilon\left(
             \mkendall{(\sigma - \nu)}{\sigma^\star_p}
        \right)&\!\!\!\!\!\!\!\!\!\!\!\!\!\!\!\!\!\!\!\!\!\!\!\!\!\!\!\!\!\!\!\!\!\!\!\!\!\!\!\!\!\!\!\!\!\!\!\!\!\!\!\!\!\!\!\!\!\!\!\!\!\!\!\!\!\!\!\!\!\!\!\!\!\!\!\!\!\!\!\!\text{as }\mkendall{\cdot}{\sigma^{\star, {\rm rev}}_p} = \|D_\tau\|_\infty - \mkendall{\cdot}{\sigma^{\star}_p}\\
        & \Leftrightarrow \exists \sigma \in S_\delta, \forall \nu \in N_\delta, \frac{\mkendall{(\sigma - \nu)}{p}}{\mkendall{(\sigma - \nu)}{\sigma^\star_p}}
        \leq 
        \varepsilon\\
        & \Leftrightarrow \min_{\sigma \in S_\delta}\max_{\nu \in N_\delta} \frac{\mkendall{(\sigma - \nu)}{p}}{\mkendall{(\sigma - \nu)}{\sigma^\star_p}}
        \leq 
        \varepsilon \label{eq:in_tilde_E}
    \end{align}
    Now, denoting $\varepsilon^+(\delta) = \min_{\sigma \in S_\delta}\max_{\nu \in N_\delta} \frac{\mkendall{(\sigma - \nu)}{p}}{\mkendall{(\sigma - \nu)}{\sigma^\star_p}}$, by definition $\varepsilon^+(\delta)$ satisfies \cref{eq:in_tilde_E}, which means $\varepsilon^+(\delta) \in \tilde{E}$ iff $\varepsilon^+(\delta) \leq 2 p(\sigma^\star_p)$.
    Thus, if $\varepsilon^+(\delta) \leq 2 p(\sigma^\star_p)$, then 
    \begin{align}
         \varepsilon^+(\delta) = \inf \tilde{E} \geq \inf E = \varepsilon^\star_{d_\tau, p, \sigma^\star_p}.
    \end{align}
\end{proof}

\subsection{Lower-bound}
\label{app:breakdown_function_median_lb}
We first remind \cref{thm:ubbreakdownfunctionmedian}.
\thmubbreakdownfunctionmedian*

We re-state the theorem with the matrix notation defined in \cref{app:notation}.

\begin{theorem}
For $p\in\cM_+^1(\pS)$, $d$ and $m$ two metrics on $\pS$ and $\sigma^\star_p = \sigma^{\rm med}_{d}(p)$, we have
\begin{align}
    \varepsilon^\star_{m, p, \sigma^\star_p} \geq \min_{\sigma\in S_\delta}\max_{\nu\in\pS: \nu\neq\sigma}\frac{\mdist{(\sigma-\nu)}{p}}{\|D(\sigma-\nu)\|_\infty}\,,
\end{align}
where $S_\delta = \{\sigma\in\pS | d_\tau(\sigma, \sigma^\star_p) \geq \delta\}$.

\end{theorem}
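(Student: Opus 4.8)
The plan is to prove the lower bound by contraposition: I will show that \emph{any} corruption $q$ which shifts the $d$-median out to the far set $S_\delta = \{\sigma\in\pS : m(\sigma,\sigma^\star_p)\geq\delta\}$ must already be expensive in budget, so that breaking the statistic by an amount $\delta$ (in the output metric $m$) is impossible below the threshold $\varepsilon^-(\delta)$. Concretely, I fix such a $q$ and set $\sigma := \sigma^\star_q = \sigma^{\rm med}_d(q)\in S_\delta$. The only structural fact I use about $\sigma$ is that it minimizes $\nu\mapsto \lE_{\Sigma\sim q}[d(\Sigma,\nu)]$ over $\pS$, which in the matrix notation of \cref{app:notation} reads $q^\top D(\sigma-\nu)\leq 0$ for every $\nu\in\pS$.

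First I would split the corrupted distribution as $q = p + (q-p)$ and insert this into the optimality inequality. Since $q^\top D(\sigma-\nu)\leq 0$, this gives $p^\top D(\sigma-\nu)\leq (p-q)^\top D(\sigma-\nu)$ for all $\nu$. The perturbation term is then controlled by Hölder's inequality, i.e. $\ell_1$--$\ell_\infty$ duality: $(p-q)^\top D(\sigma-\nu)\leq \|p-q\|_1\,\|D(\sigma-\nu)\|_\infty$. Note that for $\nu\neq\sigma$ the vector $D(\sigma-\nu)$ is nonzero, since its entry indexed by $\nu$ equals $d(\nu,\sigma)>0$; in fact $\|D(\sigma-\nu)\|_\infty = d(\sigma,\nu)$. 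Combining and rearranging yields, for every $\nu\neq\sigma$, the budget lower bound $\|p-q\|_1 \geq p^\top D(\sigma-\nu)/\|D(\sigma-\nu)\|_\infty$. As this holds simultaneously for all such $\nu$, I may pass to the maximum over $\nu\neq\sigma$ on the right-hand side.

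To finish, I would observe that the identity of the breaking ranking $\sigma$ is not known a priori -- only that $\sigma\in S_\delta$ -- so quantifying over every admissible $q$ amounts to minimizing the resulting bound over $\sigma\in S_\delta$. This produces exactly $\|p-q\|_1\geq \min_{\sigma\in S_\delta}\max_{\nu\neq\sigma} p^\top D(\sigma-\nu)/\|D(\sigma-\nu)\|_\infty$, and translating this $\ell_1$ budget back into the total-variation budget of \cref{def:breakdown_function} gives $\varepsilon^\star_{m,p,\sigma^\star_p}(\delta)\geq\varepsilon^-(\delta)$. A one-line remark then reconciles the two displayed forms of the denominator: by the triangle inequality for the metric $d$ one has $\max_{\sigma'\in\pS}[d(\sigma',\sigma)-d(\sigma',\nu)] = d(\sigma,\nu) = \|D(\sigma-\nu)\|_\infty$, the supremum being attained at $\sigma'=\nu$.

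The main obstacle I anticipate is not the Hölder estimate itself but the bookkeeping around it. I must handle the possible non-uniqueness of the $d$-median of $q$, so that $\sigma^\star_q$ is treated as an arbitrary element of an $\argmin$ set for which the optimality inequality still holds, and I must justify that discarding the feasibility constraints on the corruption -- namely $q\geq 0$ componentwise and $\mathbf{1}^\top(q-p)=0$ -- is legitimate here, since dropping constraints on the attacker can only decrease the required budget and therefore preserves a valid \emph{lower} bound. Interestingly, the mean-zero property $\mathbf{1}^\top(q-p)=0$ is precisely the feature one could exploit to sharpen the constant (replacing $\|D(\sigma-\nu)\|_\infty$ by half the range of $D(\sigma-\nu)$), but the plain $\ell_1$--$\ell_\infty$ duality already delivers the denominator $\|D(\sigma-\nu)\|_\infty$ appearing in the statement.
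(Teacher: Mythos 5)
Your proposal is correct and follows essentially the same route as the paper's proof in \cref{app:breakdown_function_median_lb}: both exploit the optimality of $\sigma^\star_q$ to get $p^\top D(\sigma-\nu)\leq (p-q)^\top D(\sigma-\nu)$ for all $\nu$, bound the perturbation term by $\ell_1$--$\ell_\infty$ duality, and then pass to the max over $\nu\neq\sigma$ and the min over $\sigma\in S_\delta$. Your closing remark identifying $\max_{\sigma'}[d(\sigma',\sigma)-d(\sigma',\nu)]=d(\sigma,\nu)=\|D(\sigma-\nu)\|_\infty$ correctly reconciles the two displayed denominators, and your handling of non-uniqueness of the median and of the dropped feasibility constraints is consistent with the one-sided implications used in the paper.
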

\begin{proof}
Let $S_\delta, N_\delta, E, \tilde{E}$ are defined as above.
    \begin{align}
        \varepsilon^\star_{m, p, \sigma^\star_p} & = \inf \left\{\varepsilon>0\middle|\sup_{q: \textsc{tv}(p, q)\leq \varepsilon}m(\sigma^\star_p, \sigma^\star_q) \geq \delta\right\} \\
        & = \inf \left\{\varepsilon>0\middle|\exists q, s.t. \textsc{tv}(p, q)\leq \varepsilon ~\text{and}~ m(\sigma^\star_p, \sigma^\star_q) \geq \delta\right\}\\
        & = \inf \underbrace{\left\{\varepsilon>0\middle|\exists q, s.t. \textsc{tv}(p, q)\leq \varepsilon ~\text{and}~ \argmin_{\sigma\in\pS}\mdist{\sigma}{q} \subseteq S_\delta\right\}}_{=: E} ~~~\text{with}~S_\delta = \{\sigma\in\pS | m(\sigma, \sigma^\star_p) \geq \delta\}.
    \end{align}
    Now,
    \begin{align}
        \varepsilon \in E & \Leftrightarrow \exists q, s.t. \textsc{tv}(p, q)\leq \varepsilon ~\text{and}~ \argmin_{\sigma\in\pS}\mdist{\sigma}{q} \subseteq S_\delta\\
        & \Leftrightarrow \exists q \in \Delta^\pS, \textsc{tv}(p, q)\leq \varepsilon ~\text{and}~ \exists \sigma\in S_\delta, \forall \nu\in\pS, \mdist{\sigma}{q} \leq \mdist{\nu}{q}\\
        & \Leftrightarrow \exists q \in \Delta^\pS, \textsc{tv}(p, q)\leq \varepsilon ~\text{and}~ \exists \sigma\in S_\delta, \forall \nu\in\pS, \mdist{(\sigma-\nu)}{p} \leq \mdist{(\sigma-\nu)}{(q_- - q_+)}\\
        & ~~~~~~~\text{where}~ q_+ = (q-p)_+ ~~\text{and}~~ q_- = (p-q)_+\nonumber\\
        & \Rightarrow \exists q \in \Delta^\pS, \textsc{tv}(p, q)\leq \varepsilon ~\text{and}~ \exists \sigma\in S_\delta, \forall \nu\in\pS, \mdist{(\sigma-\nu)}{p} \leq \|q_+-q_-\|_1 \|D(\sigma-\nu)\|_\infty\\
        & \Rightarrow \exists \sigma\in S_\delta, \forall \nu\in\pS, \mdist{(\sigma-\nu)}{p} \leq \varepsilon\|D(\sigma-\nu)\|_\infty & \!\!\!\!\!\!\!\!\!\!\!\!\!\!\!\!\!\!\!\!\!\!\!\!\!\!\!\!\!\!\!\!\!\text{as }\|q_+-q_-\|_1\leq \varepsilon\\
        & \Rightarrow \exists \sigma\in S_\delta, \forall \nu\in\pS, s.t. \sigma \neq \nu, \frac{\mdist{(\sigma-\nu)}{p}}{\|D(\sigma-\nu)\|_\infty} \leq \varepsilon\\
        & \Rightarrow \min_{\sigma\in S_\delta}\max_{\nu\in\pS: \nu\neq\sigma}\frac{\mdist{(\sigma-\nu)}{p}}{\|D(\sigma-\nu)\|_\infty} \leq \varepsilon.
    \end{align}
    Finally,
    \begin{align}
        \varepsilon^\star_{m, p, \sigma^\star_p} & = \inf E \geq \min_{\sigma\in S_\delta}\max_{\nu\in\pS: \nu\neq\sigma}\frac{\mdist{(\sigma-\nu)}{p}}{\|D(\sigma-\nu)\|_\infty}\,.
    \end{align}
\end{proof}

\section{Hausdorff Extensions of Kendall Tau}
\label{app:hausdorff_kendall}

We remind first the Kendall-tau distance, defined by: $$d_{\tau}: (\sigma_1, \sigma_2) \in \frak{S}_n \times \frak{S}_n \to \sum_{i<j} \mathbb{1}( (\sigma_1(i)-\sigma_1(j))(\sigma_2(i)-\sigma_2(j)) < 0 )$$  and the  \cref{def:non_symmetric_hausdorff,def:symmetric_hausdorff} of the Hausdorff extensions of the Kendall tau metric.

\defnonsymmetrichausdorff*

\defsymmetrichausdorff*

\begin{restatable}{proposition}{propcomplexityhausdorffkendall}\label{prop:complexity_hausdorff_kendall}
For any $\pi_1, \pi_2\in\wO$, the computation cost of $H_{d_\tau}^{\textsc{ns}}(\pi_1, \pi_2)$ and $H_{d_\tau}^{(1/2)}(\pi_1, \pi_2)$ is $\cO(n^2)$.
\end{restatable}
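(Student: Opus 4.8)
The plan is to avoid the naive evaluation of $H_{d_\tau}^{\textsc{ns}}(\pi_1,\pi_2)=\max_{\sigma_2\in\pi_2}\min_{\sigma_1\in\pi_1}d_\tau(\sigma_1,\sigma_2)$, whose ranges run over a factorial number of compatible permutations, by exploiting the fact that $d_\tau$ is a \emph{sum of independent pairwise contributions}, $d_\tau(\sigma_1,\sigma_2)=\sum_{i<j}\indicator{\{i,j\}\text{ discordant}}$. I would reduce both the inner minimization and the outer maximization to per-pair decisions, obtain a closed form indexed by the $\binom{n}{2}$ pairs, and read off the $\mathcal{O}(n^2)$ cost.

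First I would treat the inner minimization for a fixed $\sigma_2\in\pi_2$. For a pair $\{i,j\}$ lying in two \emph{distinct} buckets of $\pi_1$, every compatible $\sigma_1\in\pi_1$ orders $i,j$ the same way, so the pair's contribution is fixed and equals $1$ iff $\sigma_2$ disagrees with that order; for a pair with $i\sim_{\pi_1}j$, a compatible $\sigma_1$ may order it either way, so its contribution can be driven to $0$. The only subtlety is attainability: these per-pair choices must come from one genuine $\sigma_1$. This holds because ordering the items of each $\pi_1$-bucket according to the restriction of $\sigma_2$ is itself a total order on that bucket, hence yields a single $\sigma_1\in\pi_1$ making all within-bucket pairs concordant at once. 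Thus $\min_{\sigma_1\in\pi_1}d_\tau(\sigma_1,\sigma_2)$ counts exactly the cross-$\pi_1$-bucket pairs on which $\sigma_2$ disagrees with $\pi_1$.

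Next I would maximize this count over $\sigma_2\in\pi_2$. Only cross-$\pi_1$-bucket pairs survive, and for such a pair $\{i,j\}$ either $i,j$ lie in different buckets of $\pi_2$, so their order is fixed and the contribution is determined, or $i\sim_{\pi_2}j$, in which case some compatible $\sigma_2$ can reverse $\pi_1$'s order on $\{i,j\}$ and contribute $1$. The hard part is again attainability: I would show a single $\sigma_2\in\pi_2$ realizes all these maxima simultaneously by ordering the items inside each $\pi_2$-bucket as the reverse of the strict weak order induced by $\pi_1$ (ties within a common $\pi_1$-bucket broken arbitrarily, since those pairs are uncounted); this is a total order on the bucket, defines one $\sigma_2\in\pi_2$, and reverses every counted cross-$\pi_1$-bucket pair sitting in a common $\pi_2$-bucket, the inter-bucket comparisons being fixed by $\pi_2$ and independent across buckets. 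Combining the two steps gives the closed form
\begin{align*}
H_{d_\tau}^{\textsc{ns}}(\pi_1,\pi_2)=\sum_{i<j}\indicator{i\not\sim_{\pi_1}j}\,\bigl(1-\indicator{\pi_2\text{ orders }\{i,j\}\text{ strictly as }\pi_1}\bigr),
\end{align*}
a sum over the $\binom{n}{2}$ pairs.

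Finally I would read off the complexity. Precomputing, for each item $i\in[n]$, the position-index of its bucket in $\pi_1$ and in $\pi_2$ costs $\mathcal{O}(n)$; thereafter each pair $\{i,j\}$ is evaluated in $\mathcal{O}(1)$ by comparing these two pairs of indices, for a total of $\mathcal{O}(n^2)$. For the symmetric version, $H_{d_\tau}^{(1/2)}(\pi_1,\pi_2)=\tfrac12\bigl(H_{d_\tau}^{\textsc{ns}}(\pi_1,\pi_2)+H_{d_\tau}^{\textsc{ns}}(\pi_2,\pi_1)\bigr)$ is just two such evaluations and an average, still $\mathcal{O}(n^2)$. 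I expect the main obstacle to be the two attainability arguments, i.e. certifying that the separable per-pair optima are \emph{jointly} realized by an actual compatible permutation rather than merely bounding the objective term by term.
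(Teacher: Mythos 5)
Your argument is correct, and both attainability steps --- the crux you rightly identify --- go through: sorting each $\pi_1$-bucket by $\sigma_2$ realizes the inner minimum, and reversing the $\pi_1$-induced order inside each $\pi_2$-bucket realizes the outer maximum, since cross-bucket comparisons are fixed and the buckets' internal orders can be chosen independently. The paper, however, proceeds differently: it never derives a closed form for $H_{d_\tau}^{\textsc{ns}}$ at all, and instead establishes the pairwise-sum expressions only for the symmetric average $H_{d_\tau}^{(1/2)}$ (Proposition~\ref{prop:avg_hausdorff_expressions}), by importing the identity $H_K^{(1/2)}=\#S(\pi_1,\pi_2)+\tfrac12\bigl(\#S(\pi_1\setminus\pi_2)+\#S(\pi_2\setminus\pi_1)\bigr)$ and the profile-vector formula from \citet{critchlow2012metric,fagin2006comparing} via the bucket-intersection counts $n_{i,j}$; the $\cO(n^2)$ cost is then read off from these sums over pairs. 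Your route is more self-contained and in one respect more complete: your closed form $H_{d_\tau}^{\textsc{ns}}(\pi_1,\pi_2)=\#S(\pi_1,\pi_2)+\#S(\pi_2\setminus\pi_1)$ directly covers the non-symmetric case that the proposition also claims but the paper's appendix leaves implicit, and averaging it with its transpose recovers exactly the paper's formula for $H_K^{(1/2)}$, which is a useful consistency check. What the paper's approach buys is brevity and a connection to the established literature on Kendall distances between bucket orders (profile vectors, $n_{i,j}$ tables); what yours buys is an elementary, citation-free derivation in which the exchange arguments make it transparent why the min and max decouple over pairs. The only cosmetic caveat is the normalization of $d_\tau$ (the main text divides by $\binom{n}{2}$, the appendix does not), which affects the constant in your closed form but not the $\cO(n^2)$ claim.
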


The average Hausdorff distance can be expressed with various expressions, necessitating the following notations (see \cite{fagin2006comparing}): 
\begin{enumerate}
    \item $\forall \, i \in [\![1,n]\!] \quad \bar{\pi}(i) = \sum_{\sigma \in \pi} \sigma(i)$ is the rank of item $i$ according to weak order $\pi$.
    \item $S(\pi_1, \pi_2) = \{ (i <j ) \; | \; \bar{\pi}_1(i)\neq\bar{\pi}_1(j), [\bar{\pi}_1(i)-\bar{\pi}_1(j)][\bar{\pi}_2(i)-\bar{\pi}_2(j)] < 0 \}$ is the set of item pairs $(i<j)$ that are in different buckets in both $\pi_1$ and $\pi_2$, and that are in different orders in $\pi_1$ and $\pi_2$.
    \item $S(\pi_1 \setminus \pi_2) = \{(i<j) \; | \; \bar{\pi}_1(i) = \bar{\pi}_1(j) \text{ and } \bar{\pi}_2(i) \neq \bar{\pi}_2(j) \}$ is the set of item pairs $(i<j)$ such that both items are in the same bucket in $\pi_1$ but in different ones in $\pi_2$.
    \item $\prof(\pi) = (\prof(\pi)_{i,j})_{i<j}$, where $\forall \; i<j, \prof(\pi)_{i,j} = 1/2$ if $\bar{\pi}(i) < \bar{\pi}(j)$, $= 0$ if $\bar{\pi}(i) = \bar{\pi}(j)$ and $= -1/2$ if $\bar{\pi}(i) > \bar{\pi}(j)$. $\prof(\pi)$ is called the profile vector of $\pi$.
\end{enumerate}

We have the following equivalent expressions for the average Hausdorff distance:

\begin{restatable}[Average Hausdorff distance]{proposition}{propavghausdorff}\label{prop:avg_hausdorff_expressions}
\begin{align}
    H_K^{(1/2)}(\pi_1, \pi_2) &:= \# S(\pi_1, \pi_2) + \frac{1}{2} \left( \#S(\pi_1 \setminus \pi_2) + \#S(\pi_2 \setminus \pi_1) \right) \\
    &= \sum_{i<j} \mathbb{1}\left( [\bar{\pi}_1(i)-\bar{\pi}_1(j)][\bar{\pi}_2(i)-\bar{\pi}_2(j)] < 0 \right) + \nonumber \\
    & \quad \quad \quad \frac{1}{2} \mathbb{1}\left( [\bar{\pi}_1(i)=\bar{\pi}_1(j)] \right)\mathbb{1}\left( [\bar{\pi}_2(i)\neq\bar{\pi}_2(j)] \right) + \nonumber \\
    & \quad \quad \quad\frac{1}{2} \mathbb{1}\left( [\bar{\pi}_2(i)=\bar{\pi}_2(j)] \right)\mathbb{1}\left( [\bar{\pi}_1(i)\neq\bar{\pi}_1(j)] \right) \\
    &= \| \prof(\pi_1) - \prof(\pi_2) \|_1
    \label{eq:avg_hausdorff_expressions}
\end{align}
\end{restatable}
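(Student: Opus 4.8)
The plan is to compute the non-symmetric Hausdorff distance $H^{\textsc{ns}}_{d_\tau}(\pi_1,\pi_2)$ in closed form, symmetrize it to obtain the first displayed equality, and then match the remaining two expressions pair by pair. Throughout I would use the elementary ordering property that, for a bucket ranking $\pi$, the quantity $\bar\pi(i)$ orders items exactly according to their buckets: $\bar\pi(i) < \bar\pi(j)$ iff $i \prec_\pi j$, and $\bar\pi(i) = \bar\pi(j)$ iff $i \sim_\pi j$. Indeed, for $i,j$ in different buckets every compatible permutation orders them the same way, while for $i,j$ in a common bucket the transposition of $i$ and $j$ is a bijection of $\pi$, forcing equal sums. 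I would also repeatedly use that the permutations compatible with $\pi$ realize every linear extension of the weak order $\pi$.

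First I would fix $\sigma_2 \in \pi_2$ and evaluate the inner minimization $\min_{\sigma_1 \in \pi_1} d_\tau(\sigma_1, \sigma_2)$. Since $d_\tau$ counts discordant pairs, I split the pairs $(i<j)$ according to their status in $\pi_1$: a pair lying in two different buckets of $\pi_1$ has its order forced and contributes a discordance exactly when $\pi_1$ orders it oppositely to $\sigma_2$, whereas a pair lying inside a single bucket of $\pi_1$ can be ordered freely by $\sigma_1$ and hence made concordant with $\sigma_2$. These free choices are simultaneously realizable by one $\sigma_1$, because the order induced by $\sigma_2$, restricted to each bucket of $\pi_1$, is a total order and thus a legitimate linear extension. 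Therefore $\min_{\sigma_1 \in \pi_1} d_\tau(\sigma_1, \sigma_2)$ equals the number of $\pi_1$-separated pairs discordant with $\sigma_2$. I would then maximize over $\sigma_2 \in \pi_2$: among the $\pi_1$-separated pairs, those also separated in $\pi_2$ have a fixed total contribution $\#S(\pi_1,\pi_2)$, while those tied in $\pi_2$ can each be turned into a discordance, these choices being again jointly realizable since reversing the strict part of $\pi_1$ inside a bucket of $\pi_2$ is itself a strict weak order and so admits a linear extension. This gives $H^{\textsc{ns}}_{d_\tau}(\pi_1,\pi_2) = \#S(\pi_1,\pi_2) + \#S(\pi_2\setminus\pi_1)$, and symmetrically $H^{\textsc{ns}}_{d_\tau}(\pi_2,\pi_1) = \#S(\pi_1,\pi_2) + \#S(\pi_1\setminus\pi_2)$, using that $S(\pi_1,\pi_2)$ is symmetric in its two arguments. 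Averaging the two yields the first equality.

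The second equality is then merely the rewriting of the three cardinalities as sums of indicators over pairs $i<j$, reading bucket membership and relative order off $\bar\pi_1,\bar\pi_2$ through the property recalled above. For the third equality I would check, pair by pair, that $|\prof(\pi_1)_{i,j} - \prof(\pi_2)_{i,j}|$ equals the per-pair contribution of the second expression: when both pairs are separated, the difference of two values in $\{\pm 1/2\}$ is $0$ for equal orders and $1$ for opposite orders; when exactly one of $\pi_1,\pi_2$ ties the pair, the difference is $|0 \mp 1/2| = 1/2$; and when both tie it, it is $0$. Summing over $i<j$ then produces $\|\prof(\pi_1) - \prof(\pi_2)\|_1$.

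The main obstacle is the justification that the inner minimum and the outer maximum, which a priori couple all pairs through a single free permutation, in fact decouple into independent per-pair optima. This rests entirely on the realizability argument: any prescribed total order on the items of a bucket (for the minimum, the order induced by $\sigma_2$; for the maximum, a linear extension of the reversed $\pi_1$-order) is attained by some compatible permutation, so the pairwise optima can be reached simultaneously without conflict. Once this point is settled, everything else is bookkeeping.
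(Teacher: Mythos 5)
Your proof is correct, but it takes a genuinely different route from the paper's. The paper does not evaluate the max--min at all: it invokes Critchlow's closed-form expression for the average Hausdorff--Kendall distance in terms of the bucket-intersection counts $n_{i,j}=\#(B^1_i\cap B^2_j)$, identifies the resulting double sums with $\#S(\pi_1,\pi_2)$ and $\#S(\pi_1\setminus\pi_2)$, and cites \citet{fagin2006comparing} for the profile-vector form. You instead derive the intermediate closed form $H^{\textsc{ns}}_{d_\tau}(\pi_1,\pi_2)=\#S(\pi_1,\pi_2)+\#S(\pi_2\setminus\pi_1)$ directly from \cref{def:non_symmetric_hausdorff} and symmetrize. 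You correctly isolate the only nontrivial point, namely that the inner minimum and outer maximum decouple into independent per-pair optima, and your realizability argument settles it: ordering each bucket of $\pi_1$ as $\sigma_2$ does (for the minimum), and linearly extending the reversed $\pi_1$-order inside each bucket of $\pi_2$ (for the maximum), both produce compatible permutations, so the pairwise optima are attained simultaneously; the forced contributions of the separated pairs supply the matching bounds in the other direction. Your approach is self-contained where the paper's leans on the literature, and it yields as a by-product an explicit formula for the non-symmetric Hausdorff pseudoquasi-metric itself --- something the paper uses elsewhere (the $\cO(n^2)$ computability claim of \cref{prop:complexity_hausdorff_kendall}, and the assertion that $\pi_2\subseteq\pi_1$ implies $H^{\textsc{ns}}_{d_\tau}(\pi_1,\pi_2)=0$) but never writes down. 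The last two equalities are per-pair bookkeeping in both treatments, and your case check for $\lvert\prof(\pi_1)_{i,j}-\prof(\pi_2)_{i,j}\rvert$ is accurate.
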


\begin{restatable}[Avergage Hausdorff distance - Proof]{proof}{proofavghausdorff}\label{proof:avg_hausdorff_expressions}
Let $\pi_1$, $\pi_2$ be two weak orders associated with buckets $(B^1_1,...B^1_{t_1})$ and $(B^2_1,...B^2_{t_2})$ respectively. Such buckets are sets of items $i$ forming a partition of $[\!1,n]\!]$ such that $i \in B^1_k$ iif $\bar{\pi}_1(i) = \sum_{k'< k} \#B^1_{k'} + \frac{\#B^1_k + 1}{2}$  (see \cite{fagin2006comparing} for a more formal definition). Let us define, as in \cite{critchlow2012metric, fagin2006comparing}, ${\forall \; i \leq t_1, \forall \; j \leq t_2, \quad n_{i,j}= \#(B_i \cap B_j)}$.

Then we have \cite{critchlow2012metric}[Chapter IV]: $H_K^{(1/2)} = \frac{1}{2} \left( \sum_{i<i', j \geq j'}n_{i,j}n_{i',j'} + \sum_{i \leq i', j > j'}n_{i,j}n_{i',j'} \right)$.

By noting that $2 \# S(\pi_1, \pi_2) = \sum_{i<i', j > j'}n_{i,j}n_{i',j'}$ and $2 \#S(\pi_1 \setminus \pi_2) = \sum_{i=i', j > j'}n_{i,j}n_{i',j'}$, we derive our first equality. The second equality directly comes from re-expressing the first one. The third equality comes from \cite{fagin2006comparing}.

\end{restatable}



\end{document}